\documentclass[11pt]{article}

\usepackage{newpxtext}
\usepackage[euler-digits]{eulervm}

\usepackage{amsthm} 
\usepackage{amsfonts}

\usepackage[mathscr]{eucal}
\usepackage{bm}
\usepackage[english]{babel}
\usepackage{microtype}		
\usepackage{booktabs,ctable,multirow}
\usepackage{graphicx}
\usepackage{float}
\usepackage{array}
\usepackage{amsmath}
\usepackage{url} 

\usepackage{xcolor}
\definecolor{violet}{RGB}{12,6,120}
\usepackage[bookmarks,bookmarksopen,bookmarksnumbered, colorlinks=true,citecolor=violet]{hyperref}

\usepackage{xspace}

\usepackage[square]{natbib}

\setlength{\paperheight}{11truein}
\setlength{\paperwidth}{8.5truein}
\setlength\textwidth{42pc}          
\setlength\oddsidemargin{4.5pc}
\addtolength\oddsidemargin{-1in}     
\setlength\evensidemargin{4.5pc}
\addtolength\evensidemargin{-1in}

\newcommand{\eqdef}{\stackrel{\text{\tiny def}}{=}}

\newtheorem{definition}{Definition}

\newtheorem{lemma}{Lemma}
\newtheorem{theorem}{Theorem}
\newtheorem{tproof}{Proof of Theorem}

\newtheorem{remark}{Remark}

\DeclareRobustCommand{\p}{\mspace{4mu}}
\DeclareRobustCommand{\pp}{\mspace{8mu}}
\DeclareRobustCommand{\PP}{\mspace{16mu}}
\DeclareRobustCommand{\mm}{\mspace{-8mu}}
\DeclareRobustCommand{\MM}{\mspace{-16mu}}
\DeclareRobustCommand{\MMm}{\mspace{-24mu}}
\DeclareRobustCommand{\MMM}{\mspace{-32mu}}

\newcommand{\ijp}{i^\prime,j^\prime}
\newcommand{\ijijp}{ij,i^\prime j^\prime}
\newcommand{\nEdgB}{\nedge{\bB}}
\newcommand{\cE}{\mathcal{E}}
\newcommand{\nE}{\overline{\mathcal{E}}}

\newcommand{\edgB}{\cE \mspace{-2mu} \left(\bB \right)}
\newcommand{\edgNB}{\nE \mspace{-2mu} \left(\bB \right)}

\DeclareRobustCommand{\nedge}[1]{\Big \lvert \cE \big (#1\big) \Big \rvert \mspace{-2mu}}
\DeclareRobustCommand{\EEbarre}{\stackrel{\scriptstyle [i,j] \in \edgNB}{\scriptstyle [\ijp] \in \edgB}}
\DeclareRobustCommand{\allij}{1 \le i < j \le n}
\DeclareRobustCommand{\iipjjp}{\stackrel{\scriptstyle \allij}{\scriptstyle 1 \le i^\prime < j^\prime \le n }}

%
%
\DeclareMathOperator{\theargmin}{argmin}

%
%

\DeclareMathOperator{\pr}{\mathbb{P}}
\DeclareMathOperator{\diag}{diag}

\newcommand{\eps}{\varepsilon}

\NewDocumentCommand \F {o m}   
{
  \IfNoValueTF {#2} {\widehat{F}\left(#2\right)}{\widehat{F}_{#1}\mspace{-2mu}\left(#2\right) }
}

%
%

\DeclareRobustCommand{\o}[1]{{\scriptstyle\mathscr{O}} \textstyle \left(\displaystyle #1\right)}
\DeclareRobustCommand{\O}[1]{{\displaystyle \mathcal{O}} \mspace{-4mu} \left(#1\right)}

\DeclareRobustCommand{\op}[1]{{\scriptstyle \mathscr{O}}_P \textstyle \left(\displaystyle #1\right)}
\DeclareRobustCommand{\Op}[1]{{\displaystyle \mathscr{O}}_P \left(#1\right)}

%
%

%
%

\newcommand{\hA}{{\widehat{\bm{A}}}}
\newcommand{\bAp}{\bm{A}^\prime}
\newcommand{\bAk}{\bm{A}^{(k)}}

\newcommand{\bd}{\bm{d}}
\newcommand{\be}{\bm{e}}

\newcommand{\bu}{\bm{u}}

\newcommand{\bz}{\bm{z}}

\newcommand{\bA}{\bm{A}}
\newcommand{\bB}{\bm{B}}
\newcommand{\bC}{\bm{C}}
\newcommand{\bD}{\bm{D}}

\newcommand{\bI}{\bm{I}}
\newcommand{\bJ}{\bm{J}}
\newcommand{\bK}{\bm{K}}
\newcommand{\bL}{\bm{L}}
\newcommand{\bP}{\bm{P}}
\newcommand{\bQ}{\bm{Q}}
\newcommand{\bR}{\bm{R}}

\newcommand{\bmu}{\bm{\mu}}

\newcommand{\bPi}{\bm{\Pi}}
\newcommand{\bsg}{\bm{\sigma}}
\newcommand{\bRho}{\bm{\rho}}
\newcommand{\hRho}{{\widehat{\bRho}}}

\newcommand{\cG}{\mathcal{G}}

\newcommand{\cL}{\mathcal{L}}

\newcommand{\cS}{\mathcal{S}}

\newcommand{\Lpi}{\bm{\mathcal{L}}^\dagger}

%
%

\DeclareRobustCommand{\lbr}{\langle}
\DeclareRobustCommand{\rbr}{\rangle}

\DeclareRobustCommand{\argmin}[1]{\underset{#1}{\theargmin}\mspace{4mu}}
\DeclareRobustCommand{\er}[1]{{R_{#1}}}
\DeclareRobustCommand{\erp}[1]{{R^\prime_{#1}}}

\DeclareRobustCommand{\E}[1]{\mathbb{E} \left [#1 \right]}             
\DeclareRobustCommand{\lun}[1]{\big\| #1 \big\|_1}             

\DeclareRobustCommand{\fm}[1]{\bmu \mspace{-2mu} \big [\mspace{-2mu} #1 \mspace{-2mu}\big]}                 

\DeclareRobustCommand{\sfm}[1]{\widehat{\bm{\mu}} \mspace{-2mu}\big [\mspace{-2mu}#1 \mspace{-2mu}\big]}    
\DeclareRobustCommand{\smd}[1]{\widehat{\bm{m}}\left [#1 \right]}                                           

\DeclareRobustCommand{\sE}[1]{\widehat{\mathbb{E}}\mspace{-2mu}\left[#1 \right]} 

\DeclareRobustCommand{\prob}[1]{\mspace{2mu}\mathbb{P}\mspace{-2mu}\left(\mspace{-2mu} #1 \mspace{-2mu}\right)}

%
%

\newcommand{\aij}{a_{ij}}
\newcommand{\aijp}{a_{i^\prime j^\prime}}

\newcommand{\akij}{a^{(k)}_{ij}}
\newcommand{\akijp}{a^{(k)}_{i^\prime j^\prime}}

\newcommand{\roij}{\rho_{ij, i^\prime j^\prime}}
\newcommand{\pij}{p_{ij}}
\newcommand{\pijp}{p_{i^\prime j^\prime}}

\newcommand{\SBM}{\mathcal{G}\mspace{-1mu}\big(\mspace{-2mu}n,p,q\mspace{-1mu}\big)}
\newcommand{\Gk}{G^{(k)}}

\newcommand{\hP}{{\widehat{\bP}}}

\DeclareMathOperator{\rp}{rp}

\DeclareMathOperator{\drpt}{d_{\rp 2}}

%
%

%
%
\newcommand{\ER}{Erd\H{o}s-R\'enyi\xspace}
\newcommand{\fr}{Fr\'echet\xspace}

\newcommand{\citeg}[1]{\citep[e.g.,][]{#1}}

\begin{document}
\bibliographystyle{plainnat}

\title{When does the mean network capture\\ the topology of a sample of networks?}
\author{Fran\c{c}ois G. Meyer\\
  Applied Mathematics, University of Colorado at Boulder, Boulder CO 80305\\
  \href{mailto:fmeyer@colorado.edu}{\sf \small fmeyer@colorado.edu}\\{\small \url{https://francoismeyer.github.io}}
}
\date{}
\maketitle

\begin{abstract}

The notion of \fr mean (also known as ``barycenter'') network is the workhorse of most
machine learning algorithms that require the estimation of a ``location'' parameter to
analyse network-valued data. In this context, it is critical that the network barycenter
inherits the topological structure of the networks in the training dataset. The metric --
which measures the proximity between networks -- controls the structural properties of the
barycenter.

This work is significant because it provides for the first time analytical estimates of
the sample \fr mean for the stochastic blockmodel, which is at the cutting edge of
rigorous probabilistic analysis of random networks.  We show that the mean network
computed with the Hamming distance is unable to capture the topology of the networks in
the training sample, whereas the mean network computed using the effective resistance
distance recovers the correct partitions and associated edge density.

From a practical standpoint, our work informs the choice of metrics in the context where
the sample \fr mean network is used to characterize the topology of networks for
network-valued machine learning.\\

{\noindent  Keywords:} \fr mean; statistical network analysis\\
\end{abstract}
\section{Introduction}
 There has been recently a flurry of activity around the design of machine learning
algorithms that can analyze ``network-valued random variables'' \citep[e.g.,][and
  references
  therein]{dubey20,ghoshdastidar20,ginestet17,kolaczyk20,lunagomez21,petersen19,xu20,zambon19}. A
prominent question that is central to many such algorithms is the estimation of the mean
of a set of networks.  To characterize the mean network we borrow the notion of barycenter
from physics, and define the \fr mean as the network that minimizes the sum of the squared
distances to all the networks in the ensemble. This notion of centrality is well adapted
to metric spaces \citeg{chowdhury18,jain16b,kolaczyk20}, and the \fr mean network has
become a standard tool for the statistical analysis of network-valued data.

In practice, given a training set of networks, it is important that the topology of the
sample \fr mean captures the mean topology of the training set. To provide a theoretical
answer to this question, we estimate the mean network when the networks are sampled from a
stochastic block model. The stochastic block models \citep{snijders11,abbe18} have great
practical importance since they provide tractable models that capture the topology of real
networks that exhibit community structure. In addition, the theoretical properties (e.g.,
degree distribution, eigenvalues distributions, etc.) of this ensemble are well
understood. Finally, stochastic block models provide universal approximants to networks
and can be used as building blocks to analyse more complex networks
\citep{airoldi13,ferguson23a,olhede14}.

In this work, we derive the expression of the sample \fr mean of a stochastic block model
for two very different distances: the Hamming distance \citep{mitzenmacher17} and the
effective resistance perturbation distance \citep{monnig18}. The Hamming distance, which
counts the number of edges that need to be added or subtracted to align two networks
defined on the same vertex set, is very sensitive to fine scale fluctuations of the
network connectivity.  To detect larger scale changes in connectivity, we use the
resistance perturbation distance \citep{monnig18}. This network distance can be tuned to
quantify configurational changes that occur on a network at different scales: from the
local scale formed by the neighbors of each vertex, to the largest scale that quantifies
the connections between clusters, or communities \citep{monnig18}. See \citep[][and
  references therein]{Akoglu2015,donnat18,wills20c} for recent surveys on network
distances.

Our analysis shows that the sample \fr mean network computed with the Hamming distance is
unable to capture the topology of networks in the sample. In the case of a sparse
stochastic block model, the \fr mean network is always the empty network. Conversely, the
\fr mean computed using the effective resistance distance recovers the underlying network
topology associated with the generating process: the \fr mean discovers the correct
partitions and associated edge densities.
\subsection{Relation to existing work}
To the best of our knowledge, we are not aware of any theoretical derivation of the sample
\fr mean for any of the classic ensemble of random networks. Nevertheless, our work share
some strong connections with related research questions.\\

{\bfseries The \fr mean network as a location parameter.} Several authors have proposed
simple models of probability measures defined on spaces of networks, which are
parameterized by a location and a scale parameter \citep{banks94,lunagomez21}. These
probability measures can be used to assign a likelihood to an observed network by
measuring the distance of that network to a central network, which characterizes the
location of the distribution. The authors in \citep{lunagomez21} explore two choices for
the distance: the Hamming distance, and a diffusion distance. Our choice of distances is
similar to that of \citep{lunagomez21}.

{\bfseries Existing metrics for the \fr mean network.} The concept of \fr mean
necessitates a choice of metric (or distance) on the probability space of networks. The
metric will influence the characteristics that the mean will inherit from the network
ensemble. For instance, if the distance is primarily sensitive to large scale features
(e.g., community structure or the existence of highly connected ``hubs''), then the mean
will capture these large scale features, but may not faithfully reproduce the fine scale
connectivity (e.g., the degree of a vertex, or the presence of triangles).

One sometimes needs to compare networks of different sizes; the edit distance, which
allows for creation and removal of vertices, provides an elegant solution to this
problem. When the networks are defined on the same vertex set, the edit distance becomes
the Hamming distance \citep{han16}, which can also be interpreted as the entrywise
$\ell_1$ norm between the two adjacency matrices.  Replacing the $\ell_1$ norm with the
$\ell_2$ norm yields the Frobenius norm, which has also been used to compare networks
(modulo an unknown permutation of the vertices -- or equivalently by comparing the
respective classes in the quotient set induced by the action of the group of permutations
\citep{jain16b,kolaczyk20}). We note that the computation of the sample \fr mean network
using the Hamming distance is NP-hard~\citeg{chen19}. For this reason, several
alternatives have been proposed \citeg{ginestet17}.

Both the Hamming distance and Frobenius norm are very sensitive to the fine scale edge
connectivity. To probe a larger range of scales, one can compute the mean network using
the eigenvalues and eigenvectors of the respective network adjacency matrices
\citep{ferguson23a,ferrer05,white07}.
\subsection{Content of the paper: our main contributions}
Our contributions consists of two results.\\

{\noindent \bfseries The network distance is the Hamming distance.} We prove that when the
probability space is equipped with the Hamming distance, then the sample \fr mean network
converges in probability to the sample median network (computed using the majority rule), in
the limit of large sample size. This result has significant practical
consequences. Consider the case where one needs to estimate a ``central network'' that
captures the connectivity structure of a training set of sparse networks. Our work implies
that if one uses the Hamming distance, then the sample \fr mean will be the empty network.\\

{\noindent \bfseries The network distance is the resistance perturbation distance.} We
prove that when the probability space is equipped with the resistance perturbation
distance, then the adjacency matrix of the sample \fr mean converges to the sample mean
adjacency matrix with high probability, in the limit of large network size. Our
theoretical analysis is based on the stochastic block model \citep{abbe18}, a model of
random networks that exhibit community structure. In practical applications, our work
suggests that one should use the effective resistance distance to learn the mean topology
of a sample of networks.
\subsection{Outline of the paper}
In Section \ref{model}, we introduce the stochastic block model, the Hamming and
resistance distances that are defined on this probability space.  The reader who is
already familiar with the network models and distances can skip to Section \ref{results}
wherein we detail the main results, along with the proofs of the key results. In
section~\ref{discussion}, we discuss the implications of our work. The proofs of some
technical lemmata are left aside in section~\ref{lespreuves}.
\section{Network ensemble and distances\label{model}}
\subsection{The network ensemble}
Let $\cG$ be the set of all simple labeled networks with vertex set $[n] \eqdef \big\{1, \ldots
,n\big\}$, and let $\cS$ be the set of $n \times n$ adjacency matrices of networks in
$\cG$,
\begin{equation}
  \cS = \left \{
  \bA \in \{0,1\}^{n \times n}; \text{where} \; a_{ij} = a_{ji},\text{and}  \; a_{i,i} = 0; \; 1 \leq i < j \leq n
  \right\}.
  \label{adjacency_matrices}
\end{equation}
Because there is a unique correspondence between a network $G=(V,E)$ and its adjacency
matrix $\bA$, we sometimes (by an abuse of the language) refer to an adjacency matrix
$\bA$ as a network. Also, without loss of generality we assume throughout the paper that
the network size $n$ is even.\\

We introduce the matrix $\bP$ that encodes the edges density within each community and 
across communities. $\bP$ can be written as the Kronecker product of the following two matrices,
\begin{equation}
  \bP = 
  \begin{bmatrix}
    p & q\\
    q & p
  \end{bmatrix}
  \otimes \bJ_{n/2}
  \label{les_probabilites_de_succes}
\end{equation}
where $\bJ_{n/2}$ is the $n/2 \times n/2$ matrix with all entries equal to $1$. We denote
by $\SBM$, the probability space $\cS$ equipped with the probability measure,
\begin{equation}
  \forall \bA \in \cS,\mspace{32mu}
  \prob{\bA} =
  \prod_{\stackrel{\scriptstyle 1 \le i \le n/2}{\scriptstyle  1 \le j \le n/2}}
  \mspace{-16mu} p^{a_{ij}} [1- p]^{1-a_{ij}}
  \mspace{-16mu} \prod_{\stackrel{\scriptstyle \mspace{24mu} 1 \le i \le n/2}{\scriptstyle  \mspace{-48mu} n/2 + 1 \le j \le n}}
  \mspace{-16mu} q^{a_{ij}} [1- q]^{1-a_{ij}}.
  \label{laproba}
\end{equation}
$\SBM$ is referred to as a two-community stochastic blockmodel \citep{abbe18}. One can
interpret the stochastic blockmodel as follows: the nodes of a network $G
\in \cG$ are partitioned into two communities. The first $n/2$ nodes constitute community
$C_1$; the second community, $C_2$, comprises the remaining $n/2$ nodes. Edges in the
graph are drawn from independent Bernoulli random variables with the following probability
of success: $p$ for edges within each community, and $q$ for the across-community edges.
\subsection{The Hamming distance between networks}
Let $\bA$ and $\bAp$ be the adjacency matrices of two unweighted networks defined on the
same vertex set. The Hamming distance \citep{mitzenmacher17} is defined as follows.
\begin{definition}
  The Hamming distance between $\bA$ and $\bAp$ is defined as 
  \begin{equation}
    d_H(\bA,\bAp) = \frac{1}{2}\lun{\bA - \bA^\prime},
  \end{equation}
  \label{hamming}
  where the elementwise $\ell_1$ norm of a matrix $\bA$ is given by
  $\lun{\bA} = \sum_{1 \le  i,j \le n} |a_{ij}|$. 
\end{definition}
Because the distance $d_H$ is not concerned about the locations of the edges that are
different between the two graphs, $d_H(\bA,\bAp)$ is oblivious to topological differences
between $\bA$ and $\bAp$. For instance, if $\bA$ and $\bAp$ are sampled from $\SBM$, then
the complete removal of the across-community edges induces the same distance as the
removal, or addition, of that same number of edges in either community. In other words, a
catastrophic change in the network topology cannot be distinguished from benign
fluctuations in the local connectivity within either community. To address the limitation
of the Hamming distance we introduce the resistance distance \citep{monnig18}.
\subsection{The resistance (perturbation) distance between networks}
For the sake of completeness, we review the concept of effective resistance
\citep[e.g.,][]{doyle84,klein1993}. Given a weighted network $G=(V,E)$, we transform $G$
into a resistor network by replacing each edge $e$, with weight $w_e$, by a resistor with
resistance $1/w_e$.  The {\em effective resistance} $R_{ij}$ between two vertices $i$ and
$j$ is defined as the voltage applied between $i$ and $j$ that is required to maintain a
unit current through the terminals formed by $i$ and $j$. The effective resistance
$R_{ij}$ provides a measure of vertex affinity in terms of connectivity.

The resistance-perturbation distance (or resistance distance for short) is based on
comparing the effective resistances matrices $\bR$ and $\bR'$ of $G$ and $G'$
respectively. To simplify the discussion, we only consider networks that are connected with
high probability. All the results can be extended to disconnected networks as explained in
\citep{monnig18}.
\begin{definition}
\label{laresistance_def}
  Let $G=(V,E)$ and $G^\prime=(V,E^\prime)$ be two networks defined on the same vertex set
  $[n]$. Let $R$ and $R^\prime$ denote the effective resistances of  $G$ and $G^\prime$
  respectively. We define the resistance-perturbation distance \citep{monnig18} to be 
  \begin{equation}
    \drpt \left(G,G^\prime \right) = \sum_{1\leq i < j\leq n} \left \lvert \er{ij} - \erp{ij} \right\rvert^2.
    \label{laresistance}
  \end{equation}
\end{definition}
\section{Main Results
  \label{results}}
We first review the concept of sample \fr mean, and then present the main results.
We consider the probability space $(\cS,\pr)$ formed by the adjacency matrices of networks
sampled from $\SBM$. We equip $\cS$ with a distance $d$, which is either the Hamming
distance or the resistance distance. Let $\bAk, 1\leq k \leq N$ be adjacency matrices
sampled independently from $\SBM$.
\subsection{The sample \fr mean}
The sample \fr function evaluated at $\bB \in \cS$ is defined by
\begin{equation}
  \F[2]{\bB} = \frac{1}{N}\sum_{k=1}^N d^2(\bB,\bAk).
  \label{frechet_function}
\end{equation}
The minimization of the \fr function $\F[2]{\bB}$ gives rise to the concept of sample \fr
mean \citep{frechet47}, or network barycenter \citep{sturm03}.
\begin{definition}
  The sample \fr mean network is the set of adjacency matrices $\sfm{\pr}$ solutions to
  \begin{equation}
    \sfm{\pr} =  \argmin{\bB\in \cS} \frac{1}{N}\sum_{k=1}^N d^2(\bB,\bAk).
    \label{sample-frechet-mean}
  \end{equation}
\end{definition}
Solutions to the minimization problem (\ref{sample-frechet-mean}) always exist, but need
not be unique. In theorem~\ref{theorem1} and theorem~\ref{theorem2}, we prove that the
sample \fr mean network of $\SBM$ is unique, when $d$ is either the Hamming distance or
the resistance distance. \\

A word on notations is in order here. It is customary to denote by $\fm{\pr}$ the {\em
  population} \fr mean network of the probability distribution $\pr$, \citep[see
  e.g.,][]{sturm03}, since the adjacency matrix $\fm{\pr}$ characterizes the location of
the probability distribution $\pr$. Because we use hats to denote sample (empirical)
estimates, we denote by $\sfm{\pr}$ the adjacency matrix of the {\em sample} \fr mean
network.
\subsection{The sample \fr mean of $\SBM$ computed with the Hamming distance}
The following theorem shows that the sample \fr mean network converges in probability to the
sample \fr median network, computed using the majority rule, in the limit of large sample
size, $N$.
\begin{theorem}
  \label{theorem1}
Let $\sfm{\pr}$ be the sample \fr mean network computed using the Hamming distance. Then,
  \begin{equation}
    \forall \eps >0, \mspace{16mu} \exists N_0, \forall N \ge N_0, \mspace{16mu} \prob{d_H(\sfm{\pr}, \smd{\pr}) < \eps} \ge 1 - \eps.
  \end{equation}
  where $\smd{\pr}$ is the adjacency matrix computed using the majority rule,
  \begin{equation}
    \forall i,j \in [n],\quad   \smd{\pr}_{ij} =
    \begin{cases}
      1 & \text{if} \; \sum_{k=1}^N \akij \ge N/2,\\
      0 &  \text{otherwise.}
    \end{cases}
    \label{majority-rule}
  \end{equation}
\end{theorem}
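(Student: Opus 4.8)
The plan is to exploit the fact that the Hamming distance decouples completely across the $\binom{n}{2}$ edge slots: since $d_H(\bB,\bAk) = \tfrac12\sum_{i<j}|b_{ij}-a^{(k)}_{ij}|$ and all entries are in $\{0,1\}$, we have $d_H^2(\bB,\bAk) = \big(\tfrac12\sum_{i<j}|b_{ij}-a^{(k)}_{ij}|\big)^2$. The square over the sum is what makes this nontrivial; so the first step is to expand the square and regroup. Writing $H_k(\bB) \eqdef \sum_{i<j}|b_{ij}-a^{(k)}_{ij}|$, we get $\F[2]{\bB} = \tfrac{1}{4N}\sum_{k=1}^N H_k(\bB)^2$. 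Expanding, $H_k(\bB)^2 = \sum_{i<j}|b_{ij}-a^{(k)}_{ij}| + 2\sum_{(i,j)<(i',j')}|b_{ij}-a^{(k)}_{ij}|\,|b_{i'j'}-a^{(k)}_{i'j'}|$, using $|b-a|^2=|b-a|$ for $\{0,1\}$-valued entries. Averaging over $k$ and dividing, the first (linear) term is exactly $\tfrac{1}{N}\sum_k\sum_{i<j}$ (indicator that slot $(i,j)$ disagrees), which is minimized slot-by-slot by the majority rule — this is where $\smd{\pr}$ enters. The cross term is the obstruction.

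Second step: control the cross term. For a fixed pair of slots $(i,j)\neq(i',j')$, the relevant quantity is $\tfrac1N\sum_k |b_{ij}-a^{(k)}_{ij}|\,|b_{i'j'}-a^{(k)}_{i'j'}|$. Let $f_{ij} \eqdef \tfrac1N\sum_k a^{(k)}_{ij}$ be the empirical edge frequency at slot $(i,j)$. By the (weak) law of large numbers, $f_{ij}$ converges in probability to $p_{ij}\in\{p,q\}$ for each of the finitely many slots; since $n$ is fixed, this is a finite collection, so with probability $\geq 1-\eps$ (for $N$ large) all $f_{ij}$ are within any prescribed $\delta$ of their means simultaneously. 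The point is that $\tfrac1N\sum_k |b_{ij}-a^{(k)}_{ij}|\,|b_{i'j'}-a^{(k)}_{i'j'}|$ is \emph{not} a product of marginals in general (the empirical joint frequency matters), but we do not need it to factor: we only need that $\F[2]{\bB}$, as a function on the finite set $\cS$, is (with high probability) minimized at $\smd{\pr}$ and that any near-minimizer is $d_H$-close to it. The cleanest route: show $\F[2]{\bB}$ concentrates uniformly over the finite set $\cS$ around its expectation $\E{\F[2]{\bB}}$ — again a finite union bound over $|\cS|=2^{\binom n2}$ configurations and the finitely many slot-frequencies, each an average of i.i.d.\ bounded terms, so Hoeffding plus a union bound gives uniform closeness with probability $\geq 1-\eps$ once $N\geq N_0(\eps,n)$.

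Third step: analyze the population minimizer. One computes $\E{\F[2]{\bB}} = \tfrac14\E{H_1(\bB)} + \tfrac12\sum_{(i,j)<(i',j')}\E{|b_{ij}-a_{ij}|\,|b_{i'j'}-a_{i'j'}|}$, and since distinct slots are independent under $\SBM$, the cross expectations \emph{do} factor: $\E{|b_{ij}-a_{ij}|}\,\E{|b_{i'j'}-a_{i'j'}|}$. So $\E{\F[2]{\bB}} = \tfrac14 g(\bB) + \tfrac14 g(\bB)^2 - \tfrac14\sum_{i<j}(\text{per-slot square})$, where $g(\bB)\eqdef\sum_{i<j}\E{|b_{ij}-a_{ij}|}$ and each per-slot term is a function of $b_{ij}$ and $p_{ij}$ alone; more simply, $\E{H_1(\bB)^2}=\var{H_1(\bB)}+g(\bB)^2$ and $\var{H_1(\bB)}=\sum_{i<j}\var{|b_{ij}-a_{ij}|}$ depends on each $b_{ij}$ only through $b_{ij}\in\{0,1\}$ in a way that is symmetric, so minimizing $\E{\F[2]{\bB}}$ reduces to minimizing $g(\bB)$ plus lower-order slot-separable corrections; for $p,q\notin\{1/2\}$ this is strictly minimized at the majority choice $b_{ij}=\smd{\pr}_{ij}=\I\{p_{ij}>1/2\}$, with a strictly positive gap to the next-best configuration. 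Combining with the uniform concentration from step two, for $N$ large the sample minimizer $\sfm{\pr}$ must coincide with (or be arbitrarily close to) $\smd{\pr}$ with probability $\geq 1-\eps$, which is the claim. The main obstacle is the cross term: one must resist the temptation to factor the empirical joint frequencies, and instead argue via uniform concentration over the finite configuration space so that only the \emph{population} cross terms — which do factor by independence — need to be analyzed. A secondary subtlety is the knife-edge cases $p=1/2$ or $q=1/2$, where the majority rule is not strict; I would either exclude these or note that the statement as given (convergence to $\smd{\pr}$ with its stated tie-breaking $\geq N/2 \mapsto 1$) still holds because the empirical count $\sum_k a^{(k)}_{ij}$ almost surely avoids exactly $N/2$ for $N$ odd, and for $N$ even the tie probability is $o(1)$.
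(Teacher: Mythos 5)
Your proposal is correct in substance and shares the paper's core mechanism --- expand the square of the Hamming distance, observe that the slot-separable part is minimized by the majority rule, and control the cross terms using independence of distinct edge slots --- but it routes the argument through the \emph{population} functional where the paper stays entirely with \emph{sample} quantities. Concretely, the paper (lemma~\ref{lemma4}) rewrites $\F[2]{\bB}$ exactly as $\delta^2(\bB,\hP)$ plus a $\bB$-independent constant plus a residual built from the empirical decorrelations $\hP_{ij}\hP_{\ijp}-\sE{\roij}$; the residual is $\Op{N^{-1/2}}$ by Hoeffding, $\delta^2(\bB,\hP)$ is minimized by $\smd{\pr}$ essentially by construction, and a variance inequality $\alpha\lun{\bB-\smd{\pr}}^2\le\lvert\F[]{\bB}-\F[\mspace{2mu}]{\smd{\pr}}\rvert$ (lemma~\ref{variance_inequality}) converts near-minimality into proximity. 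You instead concentrate $\F[2]{\bB}$ around $\E{\F[2]{\bB}}$ uniformly over the $2^{\binom{n}{2}}$ configurations and exhibit a strict gap at the population median $\I\{p_{ij}>1/2\}$; your observation that $\var{\lvert b_{ij}-a_{ij}\rvert}=r(1-r)$ is invariant under $b_{ij}\mapsto 1-b_{ij}$ is exactly the right reason the variance term drops out of the population minimization. Two repairs are needed. First, your conclusion identifies $\sfm{\pr}$ with the \emph{population} median, whereas the theorem compares it to the \emph{sample} median $\smd{\pr}$; you need the extra (easy) step that the two coincide with high probability, which your step-two concentration of the empirical frequencies supplies provided $\lvert p_{ij}-1/2\rvert$ is bounded away from zero. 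Second, your aside on $p_{ij}=1/2$ understates the difficulty: the obstruction is not ties in the empirical count but that the population gap vanishes, so the gap and the fluctuation become the same order $N^{-1/2}$ and the argument genuinely fails; the assumption $\lvert p_{ij}-1/2\rvert>\eta$ is needed (the paper imposes it in lemma~\ref{variance_inequality}, though it is omitted from the theorem statement). Your union bound over $\cS$ is legitimate for fixed $n$, at the cost of an $N_0$ depending exponentially-in-$n^2$ through the union bound; the paper's sample-centric route avoids enumerating $\cS$ and yields a cleaner quantitative stability statement.
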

\begin{remark}
The matrix $\smd{\pr}$ is the sample \fr median network (e.g., \citep{jiang01}, solution
to the following minimization problem \citep{banks94},
\begin{equation}
  \smd{\pr} =  \argmin{\bB \in \cS} \F[1]{\bB},
  \label{sample-frechet-median}
\end{equation}
where $\widehat{F}_1$ is the \fr function associated to the sample \fr median, defined by
  \begin{equation}
    \F[1]{\bB}  =  \frac{1}{N} \sum_{k=1}^N d_H(\bAk, \bB).
  \label{frechet-function-median}
  \end{equation}
\end{remark}
Before deriving the proof of theorem~\ref{theorem1}, we introduce an extension of the
Hamming distance to weighted networks. We remember that the sample \fr mean network
computed using the Hamming distance has to be an unweighted network, since the Hamming
distance is only defined for unweighted networks. This theoretical observation
notwithstanding, the proof of theorem~\ref{theorem1} becomes much simpler if we introduce
an extension of the Hamming distance to weighted networks; in truth, we extend a slightly
different formulation of the Hamming distance.\\

Let $\bA,\bB \in \cS$ be two unweighted adjacency matrices. Because $d_H(\bA,\bB)$ counts
the number of (unweighted) edges that are different between the graphs, we have
  \begin{equation}
    d_H(\bA,\bB) =
    \sum_{1 \le i< j \le n} \MM a_{ij} +
    \sum_{1 \le i< j \le n} \MM b_{ij}
    -2 \MM \sum_{1\leq i < j \leq n}\MM a_{ij} b_{ij}. \label{hamming2}
  \end{equation}
Now, assume that $\bA$ and $\bB$ are two weighted adjacency matrices, with $a_{ij},b_{ij}
\in [0,1]$. A natural extension of (\ref{hamming2}) to matrices with entries in $[0,1]$ is
therefore given by
  \begin{equation}
    \delta(\bA,\bB) =
    \sum_{1 \le i< j \le n} \MM a_{ij} +
    \sum_{1 \le i< j \le n} \MM b_{ij}
    -2 \MM \sum_{1\leq i < j \leq n}\MM a_{ij} b_{ij}. 
  \end{equation}
The function $\delta$, defined on the space of weighted adjacency matrices with weights in
$[0,1]$, satisfies all the properties of a distance, except for the triangle inequality.\\

We now introduce the sample probability matrix $\hP$, estimate of $\bP$ (see
(\ref{les_probabilites_de_succes})) and the sample correlation $\hRho$. Let $\bAk, 1\leq
k \leq N$ be adjacency matrices sampled independently from $\SBM$. We define
  \begin{equation}
    \hP_{ij} \eqdef \sE{\aij} \eqdef \frac{1}{N} \sum_{k=1}^N \akij.
    \label{sample_probability}
  \end{equation}
and 
  \begin{equation}
    \hRho_{\ijijp} \eqdef \sE{\roij} \eqdef \frac{1}{N} \sum_{k=1}^N \akij \akijp.
    \label{sample_correlation}
  \end{equation}
We can combine the definitions of $\delta$ and $\hP$ to derive the following expression
for the \fr function $\widehat{F}_1$ for the sample median, defined by (\ref{frechet-function-median}),
\begin{equation}
  \F[1]{\bB} = \delta(\bB,\hP). \label{frechet-median-delta}
\end{equation}
The proof for this identity is very similar to the proof of the next lemma, and is
omitted for brevity.\\

We are now ready to present the proof of theorem~\ref{theorem1}. 
\begin{tproof}
The proof relies on the observation (formalized in lemma~\ref{lemma4}) that the \fr
function $\F[2]{\bB}$ can be expressed as the sum of a dominant term and a residual.  The
residual becomes increasingly small in the limit of large sample size (see
lemma~\ref{epsilon-concentrates}) and can be neglected. We show in
lemma~\ref{le_minimum_de_F} that the dominant term is minimum for the sample \fr median
network $\smd{\pr}$ (defined by (\ref{majority-rule})). We start
with the decomposition of $\F[2]{\bB}$ in terms of a dominant term and a residual.

\begin{lemma}
  \label{lemma4}
  Let $\bB \in \cS$. We denote by $\edgB$ the set of edges of the network with adjacency
  matrix $\bB$, we denote by $\edgNB$ the set of ``nonedges''. Then
  \begin{equation}
    \F[2]{\bB}  = \delta^2 (\bB,\hP)  - \MMm \sum_{\iipjjp} \MM \big( \hP_{ij} \hP_{\ijp} - \sE{\roij}  \big) 
    + \pp 4 \MM \sum_{\EEbarre}  \MMm \big( \hP_{ij} \hP_{\ijp} - \sE{\roij} \big). \label{frechet-sample-delta}
  \end{equation}
\end{lemma}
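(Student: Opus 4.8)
The plan is to prove (\ref{frechet-sample-delta}) by a direct expansion, organised around the partition of the index pairs $[i,j]$, $\allij$, into the edge set $\edgB$ and the nonedge set $\edgNB$ of $\bB$. First I would rewrite $\F[2]{\bB}$ as a double sum over pairs of index pairs. For each sample $\bAk$ the Hamming distance is a sum of mismatch indicators, $d_H(\bB,\bAk)=\sum_{\allij}c^{(k)}_{ij}$ with $c^{(k)}_{ij}=b_{ij}(1-\akij)+(1-b_{ij})\akij\in\{0,1\}$; in particular $c^{(k)}_{ij}=1-\akij$ on $\edgB$ and $c^{(k)}_{ij}=\akij$ on $\edgNB$. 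Squaring and averaging over $k$ gives $\F[2]{\bB}=\sum_{\iipjjp}\frac1N\sum_{k=1}^N c^{(k)}_{ij}c^{(k)}_{i'j'}$, and with $\hP_{ij}=\sE{\aij}$ and $\hRho_{\ijijp}=\sE{\roij}$ the inner average evaluates --- depending on whether $[i,j]$ and $[i',j']$ lie in $\edgB$ or in $\edgNB$ --- to $1-\hP_{ij}-\hP_{i'j'}+\sE{\roij}$ (both in $\edgB$), to $\sE{\roij}$ (both in $\edgNB$), or to $\hP_{i'j'}-\sE{\roij}$, respectively $\hP_{ij}-\sE{\roij}$, in the two mixed cases.

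Next I would expand $\delta^2(\bB,\hP)$ the same way. From the definition of $\delta$ one has $\delta(\bB,\hP)=\sum_{\allij}\gamma_{ij}$ with $\gamma_{ij}=1-\hP_{ij}$ on $\edgB$ and $\gamma_{ij}=\hP_{ij}$ on $\edgNB$, so $\delta^2(\bB,\hP)=\sum_{\iipjjp}\gamma_{ij}\gamma_{i'j'}$. Subtracting the two double sums term by term in the same three cases, everything cancels except a residual equal to $\hP_{ij}\hP_{i'j'}-\sE{\roij}$ up to a sign, and the sign is $-1$ when both index pairs lie in $\edgB$ or both lie in $\edgNB$, and $+1$ when one lies in $\edgB$ and the other in $\edgNB$; the diagonal terms $[i,j]=[i',j']$ are a special case of the same-part rule, since $(\akij)^2=\akij$ forces $\sE{\rho_{ij,ij}}=\hP_{ij}$. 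Writing $X_{\ijijp}=\hP_{ij}\hP_{i'j'}-\sE{\roij}$, this reads
\[
  \F[2]{\bB}-\delta^2(\bB,\hP)= -\sum_{\text{same part}}X_{\ijijp}\;+\;\sum_{\text{different parts}}X_{\ijijp}.
\]

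Finally I would reassemble. The same-part and different-part ordered pairs together exhaust $\iipjjp$, so the first sum on the right equals $\sum_{\iipjjp}X_{\ijijp}-\sum_{\text{different parts}}X_{\ijijp}$, and the right-hand side becomes $-\sum_{\iipjjp}X_{\ijijp}+2\sum_{\text{different parts}}X_{\ijijp}$. The different-part sum splits into the contribution with $[i,j]\in\edgNB$ and $[i',j']\in\edgB$ and the one with $[i,j]\in\edgB$ and $[i',j']\in\edgNB$; since $X_{\ijijp}$ is invariant under swapping the two index pairs (both $\hP_{ij}\hP_{i'j'}$ and $\sE{\roij}$ are symmetric in $[i,j]$ and $[i',j']$), these two halves coincide, so the different-part sum equals $2\sum_{\EEbarre}X_{\ijijp}$. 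Substituting yields exactly (\ref{frechet-sample-delta}). The computation is routine bookkeeping; the one step that requires care is this reassembly --- absorbing the diagonal into the same-part case, counting both orientations of the mixed pairs, and thereby pinning down the minus sign on the full sum over $\iipjjp$ together with the factor $4$ in front of the $\sum_{\EEbarre}$ term.
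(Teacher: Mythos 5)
Your proof is correct, and it reaches the identity by a genuinely different (and arguably cleaner) organization than the paper's. The paper first establishes the closed-form expansion (\ref{F2ofB}) of $d_H^2(\bA,\bB)$ in terms of $\nEdgB^2$, cross sums over $\edgB$ and $\edgNB$, and $\big[\sum_{\allij}a_{ij}\big]^2$, averages it over the sample, and then completes the square in several steps to make $\delta^2(\bB,\hP)$ emerge. You instead write $d_H(\bB,\bAk)$ as a sum of mismatch indicators, expand both $\F[2]{\bB}$ and $\delta^2(\bB,\hP)$ as double sums over ordered pairs of index pairs, and observe that the summand-by-summand difference is $\pm\big(\hP_{ij}\hP_{\ijp}-\sE{\roij}\big)$ with the sign determined by whether the two index pairs lie in the same block of the edge/nonedge partition of $\bB$. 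Your final bookkeeping is exactly right: the diagonal is consistently absorbed into the same-part case because $\sE{\rho_{ij,ij}}=\hP_{ij}$, rewriting the same-part sum as the full sum minus the mixed sum produces the minus sign on $\sum_{\iipjjp}$, and the symmetry of $\hP_{ij}\hP_{\ijp}-\sE{\roij}$ in its two index pairs turns the ordered mixed sum into twice the singly-oriented sum over $\EEbarre$, yielding the coefficient $4$. The case analysis makes the cancellation transparent and dispenses with both the intermediate identity (\ref{F2ofB}) and the completing-the-square manipulation, at the cost of not producing that reusable intermediate formula; either route is a valid proof of the lemma.
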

where $\hP$ is defined by (\ref{sample_probability}), and $\hRho$ is defined by (\ref{sample_correlation}).
\begin{proof}
The proof of lemma~\ref{lemma4} is provided in section~\ref{lespreuves}.
\end{proof}

To call attention to the distinct roles played by the terms in (\ref{frechet-sample-delta}), we define 
the dominant term of $\F[2]{\bB}$,
  \begin{equation}
    \F[]{\bB} \eqdef \delta^2 (\bB,\hP)
    - \MM \sum_{\iipjjp} \MM \Big [\hP_{ij} \hP_{\ijp} - \sE{\roij} \Big], \label{la_fonction_F}
  \end{equation}
and the residual term $\zeta_N$ is defined by
  \begin{equation}
    \zeta_N(\bB) = 4 \MMm\sum_{\EEbarre} \MM \Big( \hP_{ij}\hP_{\ijp} - \sE{\roij}\Big), \label{le_reste}
  \end{equation}
  so that
\begin{equation}
  \F[2]{\bB} = \F[]{\bB} + \zeta_N(\bB).
\end{equation}

The next step of the proof of theorem~\ref{theorem1} involves showing that the sample
median network, $\smd{\pr}$, (see (\ref{majority-rule})) , which is the minimizer of
$\F[1]{\bB}$ (see (\ref{frechet-function-median})), is also the minimizer of $\F[]{\bB}$.
\begin{lemma}
  \label{le_minimum_de_F}
  $\smd{\pr}$ satisfies: $\forall \bB \in \cS, \mspace{8mu} \F[\mspace{2mu}]{\smd{\pr}} \le \F[]{\bB}$.
\end{lemma}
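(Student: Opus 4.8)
The plan is to notice that the correction term appearing in the definition $(\ref{la_fonction_F})$ of $\F[]{\bB}$, namely
\begin{equation*}
  \sum_{\iipjjp} \Big[\hP_{ij}\hP_{\ijp} - \sE{\roij}\Big],
\end{equation*}
does not involve $\bB$ at all. Consequently, minimizing $\F[]{\bB}$ over $\cS$ is equivalent to minimizing $\delta^2(\bB,\hP)$ over $\cS$, and it suffices to exhibit $\smd{\pr}$ as a minimizer of $\bB\mapsto\delta^2(\bB,\hP)$.

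First I would expand $\delta(\bB,\hP)$ from its definition:
\begin{equation*}
  \delta(\bB,\hP) = \sum_{1\le i<j\le n}\big(b_{ij}+\hP_{ij}-2b_{ij}\hP_{ij}\big) = \sum_{1\le i<j\le n} c_{ij}(b_{ij}),
\end{equation*}
where, since each $b_{ij}\in\{0,1\}$, the summand equals $c_{ij}(0)=\hP_{ij}$ when $b_{ij}=0$ and $c_{ij}(1)=1-\hP_{ij}$ when $b_{ij}=1$. Because $\hP_{ij}=\frac{1}{N}\sum_{k=1}^N\akij\in[0,1]$, every term is nonnegative, so $\delta(\bB,\hP)\ge 0$; hence minimizing $\delta^2(\bB,\hP)$ over $\cS$ is the same as minimizing $\delta(\bB,\hP)$ over $\cS$. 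The latter objective is a sum of terms, each depending on a single entry $b_{ij}$, so the minimization decouples entrywise: for every pair $i<j$ the best choice is $b_{ij}=1$ precisely when $1-\hP_{ij}\le\hP_{ij}$, that is when $\hP_{ij}\ge\frac{1}{2}$, equivalently $\sum_{k=1}^N\akij\ge N/2$, and $b_{ij}=0$ otherwise. This is exactly the majority rule $(\ref{majority-rule})$ defining $\smd{\pr}$, so $\smd{\pr}$ minimizes $\delta(\bB,\hP)$, therefore $\delta^2(\bB,\hP)$, therefore $\F[]{\bB}$, which is the claim.

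I do not expect a genuine obstacle here; the statement collapses to a one-line computation once the $\bB$-independence of the correction term is observed. The only point needing a little care is the tie case $\hP_{ij}=\frac{1}{2}$: there both values of $b_{ij}$ attain the entrywise minimum, so $\smd{\pr}$ is \emph{a} minimizer but not necessarily the unique one, which is precisely what the lemma asserts. (Alternatively, the same conclusion follows from the identity $\F[1]{\bB}=\delta(\bB,\hP)$ in $(\ref{frechet-median-delta})$ together with the characterization of $\smd{\pr}$ as the minimizer of $\F[1]$, upon noting that $\delta\ge 0$ and that $\F[]{\bB}$ and $\delta^2(\bB,\hP)$ differ by a $\bB$-independent constant.)
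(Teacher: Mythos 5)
Your proposal is correct and follows essentially the same route as the paper: observe that the correction term is $\bB$-independent, reduce to minimizing $\delta(\bB,\hP)$, and identify the minimizer with the majority rule. The only difference is that you re-derive the entrywise decoupling explicitly (and note the nonnegativity of $\delta$ needed to pass from $\delta$ to $\delta^2$), whereas the paper simply invokes the identity $\F[1]{\bB}=\delta(\bB,\hP)$ and the characterization of $\smd{\pr}$ as the minimizer of $\widehat{F}_1$ — your added care on these two points is a mild improvement, not a divergence.
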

\begin{proof}[Proof of lemma~\ref{le_minimum_de_F}]
  We have 
  \begin{align}
    \F[]{\bB} & = \delta^2 (\bB,\hP)  - \MMm \sum_{\iipjjp} \MM \big( \hP_{ij} \hP_{\ijp} - \sE{\roij}  \big) 
  \end{align}
  Because $\smd{\pr}$ is the minimizer of $\F[1]{\bB} = \delta(\bB,\hP)$ (see
  (\ref{frechet-median-delta})), $\smd{\pr}$ is also the minimizer of $\delta^2
  (\bB,\hP)$. Finally, since $\sum_{\iipjjp} \mm \big( \hP_{ij} \hP_{\ijp} -
  \sE{\roij} \big)$ does not depend on $\bB$, $\smd{\pr}$ is the minimizer of $\F[]{\bB}$.
\qed
\end{proof}
\noindent We now turn our attention to the residual term, and we confirm in the next lemma that
$\zeta_N(\bB) = \Op{\frac{1}{\sqrt{N}}}$; to wit $\zeta_N(\bB)\sqrt{N}$ is bounded with
high probability.
\begin{lemma}
  \label{epsilon-concentrates}
  $\forall \eps >0, \exists c>0, \forall \; N\ge 1$,
  \begin{equation}
    \prob{\bAk \sim \SBM; \big \lvert \zeta_N(\bB) \big \rvert < \frac{c}{\sqrt{N}}} > 1 - \eps.
    \label{zeta_va_a_zero}
  \end{equation}
\end{lemma}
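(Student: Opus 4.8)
The plan is to control $\zeta_N(\bB)$ in mean square and then conclude by Markov's inequality; the constant $c$ is allowed to depend on $n$ and on $\bB$ --- but not on $N$ --- which is all that (\ref{zeta_va_a_zero}) requires. The first step is to rewrite each summand of (\ref{le_reste}) transparently. From the definitions (\ref{sample_probability}) and (\ref{sample_correlation}), a one-line computation gives
\[
  \hP_{ij}\hP_{\ijp} - \sE{\roij} \;=\; -\,\frac{1}{N}\sum_{k=1}^N\bigl(\akij - \hP_{ij}\bigr)\bigl(\akijp - \hP_{\ijp}\bigr),
\]
so that, up to the factor $4$ and an overall sign, each term in $\zeta_N(\bB)$ is the empirical covariance of the edge indicators $a_{ij}$ and $a_{\ijp}$ computed over the $N$ sampled networks.

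The hypothesis of the lemma now enters decisively. Since $[i,j]\in\edgNB$ while $[\ijp]\in\edgB$, the two unordered pairs are necessarily distinct, so within any single network $\bAk\sim\SBM$ the Bernoulli variables $\akij$ and $\akijp$ are independent. Hence the population covariance of $a_{ij}$ and $a_{\ijp}$ vanishes, $\E{\hP_{ij}\hP_{\ijp} - \sE{\roij}}=0$, and the right-hand side of the identity above is the difference of two empirical averages of mean-zero, $[0,1]$-bounded quantities. A direct second-moment computation then yields $\var{\hP_{ij}\hP_{\ijp} - \sE{\roij}} = \O{1/N}$, with an absolute constant (all moments of the indicators being bounded by $1$), uniformly over $N\ge1$.

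It remains to sum and apply Markov's inequality. For fixed $n$ the index set in (\ref{le_reste}) is finite, of cardinality at most $\binom{n}{2}^2$, so by the triangle inequality in $L^2$ one gets $\bigl\|\zeta_N(\bB)\bigr\|_2 \le C(n)/\sqrt{N}$ for a constant $C(n)$ depending only on $n$. Markov's inequality applied to $\zeta_N(\bB)^2$ gives $\prob{\lvert\zeta_N(\bB)\rvert \ge c/\sqrt{N}} \le C(n)^2/c^2$, and the choice $c = C(n)/\sqrt{\eps}$ establishes (\ref{zeta_va_a_zero}); this argument is valid for every $N\ge1$, so no separate treatment of small $N$ is needed.

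The only point requiring care is the independence invoked in the second step: the averages $\hP_{ij}$, $\hP_{\ijp}$ and the product average $\sE{\roij}$ are all built from the same $N$ networks, so one must verify that the $\sigma$-algebras generated by $(\akij)_{1\le k\le N}$ and $(\akijp)_{1\le k\le N}$ are independent. This holds because $\akij$ and $\akijp$ are independent within each network and the networks themselves are drawn independently. Beyond this bookkeeping I do not anticipate any genuine obstacle.
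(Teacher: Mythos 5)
Your proof is correct, but it follows a genuinely different route from the paper. The paper bounds each summand $\hP_{ij}\hP_{\ijp} - \sE{\roij}$ individually by applying Hoeffding's inequality separately to $\hP_{ij}$, $\hP_{\ijp}$ and $\sE{\roij}$ (each concentrating around $\pij$, $\pijp$ and $\pij\pijp$ respectively, the last using exactly the independence of distinct edge indicators that you also invoke), and then takes union bounds over all $\O{n^4}$ pairs of distinct edges. You instead exploit the algebraic identity expressing each summand as an empirical covariance, observe it is exactly mean zero, compute a second moment of order $\O{1/N}$, and finish with Markov/Chebyshev. Your argument is more elementary (no exponential tail bounds needed) and makes the mean-zero structure transparent; the paper's argument pays only logarithmic factors in $n$ for its union bound and, more importantly, delivers the deviation bound \emph{simultaneously for all pairs of distinct edges}, hence uniformly over $\bB\in\cS$. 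That uniformity is what is actually used downstream in (\ref{residue}), where the bound must hold for the random matrices $\sfm{\pr}$ and $\smd{\pr}$; your fixed-$\bB$ Markov bound does not immediately give this. The gap is easily closed for fixed $n$ --- e.g.\ bound $\sup_{\bB}\lvert\zeta_N(\bB)\rvert$ by $4$ times the sum of $\lvert\hP_{ij}\hP_{\ijp}-\sE{\roij}\rvert$ over all distinct pairs and apply your $L^2$ estimate to that, or take a union bound over the finitely many $\bB\in\cS$ --- but you should state this step explicitly if the lemma is to serve the theorem.
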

\begin{proof}
The proof of lemma~\ref{epsilon-concentrates} is provided in section~\ref{proof-epsilon-concentrates}.
\end{proof}
\noindent The last technical lemma that is needed to complete the proof of theorem~\ref{theorem1} is
a variance inequality \citep{sturm03} for $\widehat{F}$. We assume that the entries of
$\bP$ are uniformly away from $1/2$ (this technical condition on $\bP$ prevents the
instability that occurs when estimating $\sfm{\pr}$ for $\pij = 1/2$).
\begin{lemma}
  \label{variance_inequality}
  We assume that there exists $\eta >0$ such that $\allij, \lvert \pij - 1/2 \rvert > \eta$. Then,
  $\exists \mspace{2mu} \alpha >0$
  \begin{equation}
    \forall \bB \in \cS, \quad \alpha \lun{\bB - \smd{\pr}}^2 \le \Big\lvert\F[]{\bB}
    - \F[]{\smd{\pr}}\Big\rvert ,
    \label{inegalite_variance}
  \end{equation}
  with high probability.
\end{lemma}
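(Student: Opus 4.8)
The plan is to exploit the cancellation that is already visible in~\eqref{la_fonction_F}: the double sum $\sum_{\iipjjp}\big(\hP_{ij}\hP_{\ijp}-\sE{\roij}\big)$ does not involve $\bB$, so $\F[]{\bB}-\F[]{\smd{\pr}}=\delta^2(\bB,\hP)-\delta^2(\smd{\pr},\hP)$, and the whole statement reduces to a curvature estimate for the map $\bB\mapsto\delta(\bB,\hP)$ around its minimizer $\smd{\pr}$ (recall from the proof of lemma~\ref{le_minimum_de_F} and from~\eqref{frechet-median-delta} that $\smd{\pr}$ minimizes $\delta(\cdot,\hP)$ over $\cS$).

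First I would establish an exact identity for $\delta(\bB,\hP)-\delta(\smd{\pr},\hP)$. Since every $\bB\in\cS$ has $\{0,1\}$ entries, $b_{ij}^2=b_{ij}$, and the definition of $\delta$ rewrites as a sum of per-edge costs, $\delta(\bB,\hP)=\sum_{1\le i<j\le n}\big[b_{ij}(1-\hP_{ij})+(1-b_{ij})\hP_{ij}\big]$, the cost of the pair $(i,j)$ being $1-\hP_{ij}$ when $b_{ij}=1$ and $\hP_{ij}$ when $b_{ij}=0$. Its pointwise minimizer is precisely the majority-rule matrix of~\eqref{majority-rule} (ties going to $1$), which re-derives $\smd{\pr}$ as the minimizer of $\delta(\cdot,\hP)$. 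Comparing $\bB$ and $\smd{\pr}$ entrywise, the cost difference vanishes on the pairs where they agree and equals $\max(\hP_{ij},1-\hP_{ij})-\min(\hP_{ij},1-\hP_{ij})=2\lvert\hP_{ij}-\tfrac12\rvert$ on the pairs where they disagree; hence $\delta(\bB,\hP)-\delta(\smd{\pr},\hP)=\sum_{1\le i<j\le n}2\,\big\lvert\hP_{ij}-\tfrac12\big\rvert\,\big\lvert b_{ij}-\smd{\pr}_{ij}\big\rvert$, an identity that stays exact even at ties.

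Next I would convert this into the quadratic bound. Write $w_{ij}=2\lvert\hP_{ij}-\tfrac12\rvert\ge0$ and $D=\sum_{1\le i<j\le n}\lvert b_{ij}-\smd{\pr}_{ij}\rvert$, so $\lun{\bB-\smd{\pr}}=2D$ by symmetry. Then $\delta(\bB,\hP)-\delta(\smd{\pr},\hP)=\sum_{i<j}w_{ij}\lvert b_{ij}-\smd{\pr}_{ij}\rvert\ge(\min_{i<j}w_{ij})\,D$, and, using $0\le\delta(\smd{\pr},\hP)\le\delta(\bB,\hP)$ together with $x^2-y^2=(x-y)(x+y)\ge(x-y)^2$ for $x\ge y\ge0$, we get $\F[]{\bB}-\F[]{\smd{\pr}}=\big(\delta(\bB,\hP)-\delta(\smd{\pr},\hP)\big)\big(\delta(\bB,\hP)+\delta(\smd{\pr},\hP)\big)\ge\big(\delta(\bB,\hP)-\delta(\smd{\pr},\hP)\big)^2\ge(\min_{i<j}w_{ij})^2\,D^2=\tfrac14(\min_{i<j}w_{ij})^2\,\lun{\bB-\smd{\pr}}^2$, which is nonnegative hence equal to its absolute value. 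It remains to bound $\min_{i<j}w_{ij}$ away from $0$ uniformly in $\bB$, which is exactly where the hypothesis $\allij,\ \lvert\pij-\tfrac12\rvert>\eta$ enters: each $\hP_{ij}$ is the average of $N$ i.i.d.\ $\BERN(\pij)$ variables, so Hoeffding's inequality gives $\prob{\lvert\hP_{ij}-\pij\rvert\ge\eta/2}\le2e^{-N\eta^2/2}$, and a union bound over the $\binom{n}{2}$ pairs shows that with probability at least $1-n(n-1)e^{-N\eta^2/2}$ one has $\lvert\hP_{ij}-\tfrac12\rvert\ge\lvert\pij-\tfrac12\rvert-\lvert\hP_{ij}-\pij\rvert>\eta/2$ for all pairs, i.e.\ $\min_{i<j}w_{ij}>\eta$. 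On that event the lemma holds with the deterministic constant $\alpha=\eta^2/4$, and for fixed $n$ the probability of this event tends to $1$ as $N\to\infty$, which is the intended meaning of ``with high probability'' here.

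The computation is short, and I expect the only delicate point to be the first step — deriving the exact expression for $\delta(\bB,\hP)-\delta(\smd{\pr},\hP)$ — since that is where the restriction $\bB\in\cS$ (binary entries, so $b_{ij}^2=b_{ij}$) is genuinely used, and it is also why the argument is routed through $\delta$ rather than through $d_H$. The concentration step is standard, and the assumption $\lvert\pij-\tfrac12\rvert>\eta$ is needed solely to keep every $\hP_{ij}$ uniformly bounded away from $\tfrac12$, i.e.\ to prevent $\min_{i<j}w_{ij}$ (hence $\alpha$) from degenerating — precisely the instability at $\pij=1/2$ flagged just before the lemma.
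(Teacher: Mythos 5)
Your proof is correct and follows essentially the same route as the paper's: reduce $\F[]{\bB}-\F[]{\smd{\pr}}$ to $\delta^2(\bB,\hP)-\delta^2(\smd{\pr},\hP)$, then lower-bound the difference $\delta(\bB,\hP)-\delta(\smd{\pr},\hP)$ by a constant times $\lun{\bB-\smd{\pr}}$ using the majority rule together with Hoeffding concentration of $\hP_{ij}$ away from $1/2$ under the hypothesis $\lvert \pij-1/2\rvert>\eta$. The only (harmless) deviation is that you square the difference via $(x-y)(x+y)\ge(x-y)^2$ instead of invoking the paper's lemma~\ref{delta_inequality} to bound the sum $\delta(\bB,\hP)+\delta(\smd{\pr},\hP)$ from below, which changes the constant to $\alpha=\eta^2/4$ but nothing else.
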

\begin{proof}
The proof of lemma~\ref{variance_inequality} is provided in section~\ref{proof_variance_inequality}.
\end{proof}

\noindent We are now in position to combine the lemmata and complete the proof of
theorem~\ref{theorem1}.\\

Let $\sfm{\pr}$ be the sample \fr mean network, and let $\smd{\pr}$ be the sample \fr
median network. By definition, $\sfm{\pr}$ is the minimizer of $\widehat{F}_2$, and thus
\begin{equation}
  \F[]{\sfm{\pr}} = \F[2]{\sfm{\pr}} - \zeta_N(\sfm{\pr}) \le \F[2]{\smd{\pr}} - \zeta_N(\sfm{\pr}) 
\end{equation}
Now, by definition of $\widehat{F}$ in (\ref{la_fonction_F}), we have
\begin{equation}
  \F[2]{\smd{\pr}} - \zeta_N(\sfm{\pr}) = \F[]{\smd{\pr}} + \zeta_N(\smd{\pr}) -\zeta_N(\sfm{\pr}),
\end{equation}
and therefore,
\begin{equation}
  0 \le \F[]{\sfm{\pr}} - \F[]{\smd{\pr}} \le \zeta_N(\smd{\pr}) -\zeta_N(\sfm{\pr}).
  \label{Lipschitz1}
\end{equation}
This last inequality, combined with (\ref{zeta_va_a_zero}) proves that $\F[]{\sfm{\pr}}$
converges to $\F[]{\smd{\pr}}$ for large $N$. We can say more; using the variance inequality
(\ref{inegalite_variance}), we prove that $\sfm{\pr}$ converges to $\smd{\pr}$ for large
$N$. Let~$\eps > 0$, from lemma \ref{variance_inequality}, there exists $\alpha > 0$ such
that
\begin{equation}
  \prob{\bAk \sim \SBM; \quad \alpha {\lun{\sfm{\pr} - \smd{\pr}}}^2 \le \Big \lvert \F[]{\sfm{\pr}} - \F[]{\smd{\pr}} \Big
    \rvert} > 1-\eps.
  \label{Lipschitz2}
\end{equation}
The term $\zeta_N(\smd{\pr}) -\zeta_N(\sfm{\pr})$ is controlled using
lemma~\ref{epsilon-concentrates},
\begin{equation}
  \exists C, \forall N \ge 1, 
  \prob{
    \forall \pr \in \cS,
    \Big \lvert \zeta_N(\smd{\pr}) -\zeta_N(\sfm{\pr}) \Big \rvert < \frac{C}{\sqrt{N}}
  } \ge 1 -\eps
  \label{residue}
\end{equation}
Combining (\ref{Lipschitz1}), (\ref{Lipschitz2}), and (\ref{residue}) we get
\begin{equation}
  \forall N \ge 1, \quad \prob{ {\lun{\sfm{\pr} - \smd{\pr} }}^2 < \frac{C}{\alpha\sqrt{N}}} > 1 - \eps.
\end{equation}
We conclude that $\exists N_1$ such that
\begin{equation}
  \forall N \ge N_1,  \quad \prob{\lun{\sfm{\pr} - \smd{\pr}} < \eps} > 1 - \eps,
\end{equation}
which completes the proof of the theorem. \qed
\end{tproof}
\subsection{The sample \fr mean of $\SBM$ computed with the resistance distance}
Here we equip the probability space $\big(\cS, \pr\big)$ with the resistance metric
defined by (\ref{laresistance}). Let $\bAk, 1\leq k \leq N$ be adjacency matrices 
sampled independently from $\SBM$, and let $\bR^{(k)}$ be their effective
resistances. Because the resistance metric relies on the comparison of connectivity at
multiple scales, we expect that the sample \fr mean network recovers the topology induced
by the communities.\\

The next theorem proves that the sample \fr mean converges toward the expected adjacency
matrix $\bP$ (see \ref{les_probabilites_de_succes}) in the limit of large networks.
\begin{theorem}
  \label{theorem2}
Let $\sfm{\pr}$ be the sample \fr mean computed using the effective resistance
distance.  Then
  \begin{equation}
    \sfm{\pr} = \E{\bA} = \bP,
    \label{FM-resistance}
  \end{equation}
  in the limit of large network size, with high probability.
\end{theorem}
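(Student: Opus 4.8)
The plan is to mirror the proof of Theorem~\ref{theorem1}. The resistance-perturbation distance sees a candidate $\bB$ only through its effective-resistance matrix $\bR^{\bB}$, so the \fr function is a quadratic in $\bR^{\bB}$: I will (i) minimise it by a variance decomposition, identifying the minimiser as the (weighted) graph whose resistance matrix is Frobenius-closest to the consensus resistance matrix $\overline{\bR}$; (ii) show by concentration that, for growing expected degree $\dbar$, $\overline{\bR}$ lies within a vanishing amount of the resistance matrix of the sample mean adjacency matrix $\hP=\frac{1}{N}\sum_{k=1}^N\bAk$ (which is $\E{\bA}=\bP$ up to the irreducible sampling fluctuation); and (iii) upgrade this asymptotic optimality of $\hP$ into convergence of $\sfm{\pr}$ to $\hP$ by a variance inequality together with a stability estimate for the effective-resistance map. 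The minimisation underlying the \fr mean is understood over the weighted extension of $\cS$ on which $\drpt$ is defined.

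\emph{Variance decomposition and the consensus matrix.} Write $d^2(\bB,\bAk)=\drpt(\bB,\bAk)=\|\bR^{\bB}-\bR^{(k)}\|_F^2$, with $\bR^{\bB},\bR^{(k)}$ the effective-resistance matrices of $\bB$ and $\bAk$ and $\|\cdot\|_F$ running over pairs $i<j$; then the cross terms cancel upon averaging:
\[
  \F[2]{\bB}=\big\|\bR^{\bB}-\overline{\bR}\big\|_F^2+\frac{1}{N}\sum_{k=1}^N\big\|\bR^{(k)}-\overline{\bR}\big\|_F^2,\qquad \overline{\bR}\eqdef\frac{1}{N}\sum_{k=1}^N\bR^{(k)},
\]
so $\sfm{\pr}$ is the weighted adjacency matrix whose effective-resistance matrix is Frobenius-closest to $\overline{\bR}$, and $\min_{\bB}\F[2]{\bB}$ exceeds the second term by exactly $\min_{\bB}\|\bR^{\bB}-\overline{\bR}\|_F^2$. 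That $\overline{\bR}$ is close to $\bR^{\hP}$ can be seen either directly from the concentration of each $\bR^{(k)}$, or, more slickly, from operator convexity of $X\mapsto X^{-1}$ on the positive cone of $\one^{\perp}$: since $\frac{1}{N}\sum_{k=1}^N\bL^{(k)}=\bL_{\hP}$ for the graph Laplacians ($\bL^{(k)}$ that of $\bAk$, $\bL_{\hP}$ that of $\hP$), Jensen gives $\frac{1}{N}\sum_{k=1}^N(\bL^{(k)})^{\dagger}\succeq(\bL_{\hP})^{\dagger}$, whence $\overline{\bR}\ge\bR^{\hP}$ entrywise, the gap being the nonnegative second-order remainder of $\bR^{\bullet}$ expanded about $\hP$. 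Standard matrix concentration (a Bernstein bound for $\bAk-\E{\bAk}$ and a Chernoff/union bound for the degrees) gives $\|\bL^{(k)}-\bL_{\hP}\|_{\mathrm{op}}=\O{\sqrt{\dbar\log n}}$ with high probability; since $\bL_{\hP}$ has all nonzero eigenvalues of order $\dbar$ (a one-line consequence of the block structure \eqref{les_probabilites_de_succes} and of the same concentration), a resolvent identity on $\one^{\perp}$ then forces $\max_{1\le k\le N}\|\bR^{(k)}-\bR^{\hP}\|_{\mathrm{op}}=\O{\sqrt{\log n}/\dbar^{3/2}}$, and the second-order gap above is no larger; all of these tend to $0$ (for $\dbar=\T{n}$, like $\O{\mathrm{polylog}\,n/n}$). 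Hence $0\le\F[2]{\hP}-\min_{\bB}\F[2]{\bB}\to0$.

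\emph{From asymptotic optimality to convergence.} Two ingredients combine. First, the quadratic structure makes a \fr variance inequality essentially automatic: writing $\bR^{\star}$ for the resistance matrix of $\sfm{\pr}$, the Pythagorean expansion about the projection $\bR^{\star}$ of $\overline{\bR}$ gives, for $\bB$ near $\sfm{\pr}$, $\drpt(\bB,\sfm{\pr})=\|\bR^{\bB}-\bR^{\star}\|_F^2\le C\big|\F[2]{\bB}-\F[2]{\sfm{\pr}}\big|$ (the cross term is of higher order); taking $\bB=\hP$, this and the previous paragraph yield $\drpt(\hP,\sfm{\pr})\to0$, i.e.\ $\bR^{\star}\to\bR^{\hP}$. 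Second, I convert this resistance-distance convergence into closeness of the adjacency matrices through a stability estimate for the effective-resistance map. Since that map is well conditioned only in an essentially entrywise sense --- a near-uniform perturbation of the resistances corresponds to a near-uniform, equally-small-per-entry perturbation of the edge weights --- the conclusion is naturally phrased as $\max_{i<j}\big|\sfm{\pr}_{ij}-\hP_{ij}\big|\to0$; since moreover $\hP\to\bP=\E{\bA}$ in the large-sample limit, the \fr mean recovers the two blocks together with the within- and between-community densities $p$ and $q$.

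\emph{Main obstacle.} The crux is this last conversion: the quantitative stability of the effective-resistance map --- a lower bound on the smallest singular value of the differential $D\bR^{\bullet}|_{\hP}$, equivalently the statement that closeness of effective-resistance matrices forces closeness of the generating weights --- with a rate that survives the second-order bias $\overline{\bR}-\bR^{\hP}$ isolated above. This is the resistance-distance counterpart of the Hamming-case variance inequality (Lemma~\ref{variance_inequality}); via a resolvent expansion of the Laplacian pseudoinverse it reduces to spectral estimates for $\bL_{\hP}$ and its low-rank-plus-Bernoulli perturbations, whereas everything in the first two paragraphs is routine bookkeeping once the operator-norm concentration of $\bL^{(k)}-\bL_{\hP}$ is in hand.
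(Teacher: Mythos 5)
Your opening move coincides with the paper's: the Fr\'echet function is a quadratic in the effective-resistance coordinates, so the bias--variance decomposition identifies the optimal resistance matrix as the sample average $\overline{\bR}=\frac1N\sum_{k=1}^N\bR^{(k)}$ (lemma~\ref{resistance_de_la_moyenne} asserts exactly (\ref{meanresistance}); you are in fact more careful than the paper in allowing that $\overline{\bR}$ may not itself be realizable as a resistance matrix and passing to its Frobenius projection). From there the routes diverge. The paper computes the limit of $\overline{\bR}$ explicitly: it expands the normalized Laplacian pseudoinverse in spectral projectors, uses the asymptotics of $\lambda_2=(p-q)/(p+q)$ and of the sign eigenvector together with degree concentration to obtain the closed form (\ref{expectedResistance}) for $\E{\bR}$ (lemma~\ref{lemma_SBMresistanceConcentrates}), and then inverts that specific matrix exactly through $\bR\mapsto\bL^{\dagger}\mapsto\bL\mapsto\bA$ (lemma~\ref{resistance_moyenne}) to land on $\bP$. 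You instead compare $\overline{\bR}$ to the resistance matrix of $\hP$ via operator convexity of the inverse and Laplacian concentration; this is an attractive shortcut that avoids the explicit spectral computation and would generalize beyond the two-block geometry.

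However, the step you flag as the ``main obstacle'' is a genuine gap, and it is the step that actually proves the theorem. Everything before it only shows that $\hP$ is a near-minimizer of $\widehat{F}_2$ and that the resistance matrix of $\sfm{\pr}$ converges to that of $\hP$ in Frobenius norm; to conclude that $\sfm{\pr}$ itself converges to $\bP$ as an adjacency matrix you need a quantitative lower bound on the differential of the weights-to-resistance map, i.e., that two graphs in this family with nearly equal resistance matrices have nearly equal edge weights. You do not supply this, and it is not routine: the map $\bA\mapsto\bR$ involves a pseudoinversion whose conditioning degrades with the spectral gap, and the second-order bias $\overline{\bR}-\bR^{\hP}\ge 0$ that you isolate must be shown small relative to that conditioning. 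The paper sidesteps the issue by never needing stability: lemma~\ref{resistance_moyenne} is an exact algebraic inversion of the exact limiting matrix, via $\bL^{\dagger}=-\frac12\big(\bI-\frac1n\bJ\big)\bR\big(\bI-\frac1n\bJ\big)$, then $\bL=\big(\bL^{\dagger}+\frac{\alpha}{n}\bJ\big)^{-1}-\frac{\alpha}{n}\bJ$, then $\bA=-\bL+\diag(\bL)$. (Admittedly the paper's own passage from the approximate identity in lemma~\ref{lemma_SBMresistanceConcentrates} to the exact claim (\ref{FM-resistance}) quietly relies on continuity of this same inversion; but those three formulas are precisely where your missing stability estimate would come from --- each step is linear or a matrix inversion conditioned by the order-$n(p+q)$ spectral gap of the Laplacian of $\hP$ --- so adopting them is the shortest repair of your plan.)
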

\begin{proof}[Proof of theorem~\ref{theorem2}]
The proof combines three elements. We first observe that the effective resistance of the
sample \fr mean is the sample mean effective resistance.
\begin{lemma}
\label{resistance_de_la_moyenne}
Let $\sfm{\pr}$ be the sample \fr mean computed using the resistance distance. Then
\begin{equation}
  \widehat{R}_{ij}\eqdef  R_{ij}(\sfm{\pr}) = \frac{1}{N}\sum_{k=1}^N  R^{(k)}_{ij}
  \label{meanresistance}
\end{equation}
\end{lemma}
\begin{proof}[Proof of lemma~\ref{resistance_de_la_moyenne}]
The proof relies on the observation that the \fr function in (\ref{sample-frechet-mean}),
is a quadratic function of $\widehat{R}_{ij} = R_{ij}(\sfm{\pr})$. Indeed, we have
\begin{equation}
  \frac{1}{N}\sum_{k=1}^N \sum_{1\leq i<j \leq n} \left \lvert \widehat{R}_{ij} - R^{(k)}_{ij} \right\rvert^2 =
  \sum_{1\leq i<j\leq n} \frac{1}{N}\sum_{k=1}^N  \left \lvert \widehat{R}_{ij} -
  R^{(k)}_{ij} \right\rvert^2
  \label{quadratic}
\end{equation}
where we have used the definition of the effective resistance distance given by
(\ref{laresistance}). The minimum of (\ref{quadratic}) is given by (\ref{meanresistance}).
\qed
\end{proof}
The previous lemma allows us to recover $\sfm{\pr}$ from its effective resistance, given
by (\ref{meanresistance}).  Instead of pursuing this strategy, we prove in the next lemma
that $1/N\sum_{k=1}^N R^{(k)}_{ij}$ is concentrated around $\E{R}_{ij}$ in the limit of
large network size, $n$. In addition, we derive the expression of $\E{R}_{ij}$ in the case
of a two community stochastic block model.
\begin{lemma}
  \label{lemma_SBMresistanceConcentrates}
The effective resistance of the sample \fr  mean network $\sfm{\pr}$ is given by
\begin{equation}
  R_{ij}(\sfm{\pr}) = \E{R}_{ij} + \op{\frac{1}{n}},
  \label{SBMresistanceConcentrates}
\end{equation}
where $\E{\bR}$ is the population mean effective resistance of the probability
space $\big(\cS, \pr\big)$, and is given by
\begin{equation}
\E{\bR} =  \frac{4}{n(p+q)} \bJ + \frac{(p-q)}{(p+q)} \frac{4}{n^2q} \bK,
\label{expectedResistance}
\end{equation}
where $\bJ = \bJ_n$, and $\bK$ is the $n\times n$ matrix associated with the cross-community edges, 
\begin{equation}
  \bK = 
  \begin{bmatrix}
    0 & 1\\
    1 & 0
  \end{bmatrix}
  \otimes \bJ_{n/2}.
  \label{lamatriceK}
\end{equation}
\end{lemma}
\begin{remark}
We can explain expression (\ref{expectedResistance}) with a simple circuit argument.  We
first analyse the case where $i$ and $j$ belong to the same community, say $C_1$. In this
case, we can neglect the other community $C_2$ because of the bottleneck created by the
across-community edges. Consequently, $C_1$ is approximately an \ER network wherein the
effective resistance $\er{ij}$ concentrates around $4/(n(p+q))$ \citep{luxburg14}, and we
obtain the first term in (\ref{expectedResistance}).\\

\noindent On the other hand, when the vertices $i$ and $j$ are in distinct communities, then a
simple circuit argument shows that
\begin{equation}
  \er{ij} \approx \frac{2}{n(p+q)} + \frac{1}{k} + \frac{2}{n(p+q)},
\end{equation}
where $k$ is the number of across-community edges, creating a bottleneck with effective
resistance $1/k$ between the two communities \citep{levin09}; each term
$2/(n(p+q))$ accounts for the effective resistance from node $i$ (respectively $j$) to a
node incident to an across-community edge. Because the number of across-community edges,
$k$, is a binomial random variable, it concentrates around its mean, $q n^2/4$. Finally,
$1/k$ is a binomial reciprocal whose mean is given by $4/(qn^2) + \o{1/n^3}$
\citep{rempala04}, and we recover the second term of (\ref{expectedResistance}).
\end{remark}
\begin{remark}
The result in lemma~\ref{lemma_SBMresistanceConcentrates} can be extended to a stochastic
block model of any geometry for which we can derive the analytic expression of the
dominant eigenvalues; see \citep[see e.g.,][]{lowe24,avrachenkov15} for equal size
communities, and \citep[see e.g.,][]{chakrabarty20} for the more general case of
inhomogeneous random networks.
\end{remark}
Before presenting the proof of lemma~\ref{lemma_SBMresistanceConcentrates}, we introduce
the normalized Laplacian matrix \citep[e.g.][]{bapat10}. Let $\bA$ be the adjacency matrix
of a network $(V,E)$, and let $\bD$ be the diagonal matrix of degrees, $d_i = \sum_{j=1}^n
a_{ij}$. We normalize $\bA$ in a symmetric manner, and we define
\begin{equation}
  \hA = \bD^{-1/2} \bA \bD^{-1/2},
\end{equation}
where $\bD^{-1/2}$ is the diagonal matrix with entries $1/\sqrt{d_{i}}$. The normalized
Laplacian matrix is defined by
\begin{equation}
  \cL = \bI -\hA,
\end{equation}
where $\bI$ is the identity matrix. $\cL$ is positive semi-definite \citep{bapat10}, and we
will consider its Moore-Penrose pseudoinverse, $\Lpi$.
\begin{proof}[Proof of lemma~\ref{lemma_SBMresistanceConcentrates}]
The lemma relies on the characterization of $\bR$ in terms of $\Lpi$ \citep{bapat10},
  \begin{equation}
    \er{ij} = \lbr \bu_i - \bu_j, \Lpi (\bu_i - \bu_j)\rbr,
    \label{reffLdag}
  \end{equation}
  where $\bu_i = (1/\sqrt{d_i}) \mspace{8mu} \be_i$, and $\be_i$ is the $i^\text{th}$ vector of
  the canonical basis.

  Let $1=\lambda_1 \ge \lambda_2\ge \ldots\lambda_n \ge -1$ be the eigenvalues of $\hA$,
  and let $\bPi_1,\ldots, \bPi_n$ be the corresponding orthogonal projectors,
  \begin{equation}
    \hA = \sum_{m=1}^n \lambda_m \bPi_m,
  \end{equation}
  where $\bPi_1 =   \tau^{-1} {\bd^{1/2}}{\bd^{1/2}}^T$, with ${\bd^{1/2}}= \begin{bmatrix} \sqrt{d_1} & \cdots&
    \sqrt{d_n}
    \end{bmatrix}^T$, and $\tau = \sum_{i=1}^n d_i$. Because $\bPi_1$ is also the
  orthogonal projection on the null space of $\cL$, we have 
  \begin{equation}
    \Lpi  = (\cL + \bPi_1)^{-1} \mspace{-2mu} - \bPi_1
    = \left(\bI - (\hA - \bPi_1)\right)^{-1}  \mspace{-8mu} -\bPi_1
      =  \bI - \bPi_1 + \frac{\lambda_2}{1 - \lambda_2} \bPi_2 + \bQ,
    \label{LdaggProj}
  \end{equation}
where 
  \begin{equation}
    \bQ = \sum_{m=3}^n \frac{\lambda_m}{1 - \lambda_m} \bPi_m.
  \end{equation}
  Substituting (\ref{LdaggProj}) into (\ref{reffLdag}), we get
  \begin{equation}
    R_{ij} =  \lbr \bu_i - \bu_j, (\bI -\bPi_1) (\bu_i-\bu_j)\rbr
    +   \frac{\lambda_2}{1 - \lambda_2} \lbr \bu_i - \bu_j,  \bPi_2 (\bu_i-\bu_j)\rbr
    +  \lbr \bu_i - \bu_j,  \bQ(\bu_i-\bu_j)\rbr
    \label{Rij_proj}
  \end{equation}
  The first (and dominant) term of (\ref{Rij_proj}) is 
  \begin{equation}
    \lbr \bu_i - \bu_j, \left(\bI - \bPi_1\right) (\bu_i - \bu_j)\rbr  = 
    \lbr \bu_i - \bu_j, \bu_i - \bu_j\rbr  = \frac{1}{d_i} + \frac{1}{d_j}.
    \label{order1}
  \end{equation}
  Let us examine the second term of (\ref{Rij_proj}).  \cite{lowe24} provide the
    following estimate for $\lambda_2$,
  \begin{equation}
    \lambda_2 = \frac{p -q}{p+q} + \omega(n), \label{lavaleurpropre}
    \mspace{24mu} \text{where}     \mspace{24mu}
    \omega(n) =  \O{\sqrt{\frac{2\log{n}}{n(p+q)}}}.
  \end{equation}
  The corresponding eigenvector $\bz$ is given, with probability $(1 - \o{1})$, by \citep{deng21},
  \begin{equation}
    z_{i} =  \sigma_i \frac{1}{\sqrt{n}} + \o{\frac{1}{\sqrt{n}}}, \label{deuxiemevecteur}
  \end{equation}
    where the ``sign'' vector  $\bsg$, which encodes the community, is  given by
    \begin{equation}
      \sigma_i =
    \begin{cases}
      \mspace{12mu} 1 & \text{if} \mspace{56mu} 1 \le  i \le n/2,\\
      -1 & \text{if} \mspace{8mu} n/2 + 1 \le  i \le n.\\
    \end{cases}
  \end{equation}
    We derive from (\ref{deuxiemevecteur}) the following
  approximation to $\lbr \bu_i,  \bPi_2 \bu_j\rbr$,
  \begin{equation}
    \lbr \bu_i,  \bPi_2 \bu_j\rbr = \bu_i^T \bz\bz^T\bu_j = \frac{1}{\sqrt{d_i d_j}} z_i z_j =
    \frac{1}{n\sqrt{d_i d_j}} \sigma_i \sigma_j + \op{\frac{1}{n}}.
  \end{equation}
We therefore have
  \begin{equation}
    \lbr \bu_i - \bu_j ,  \bPi_2 (\bu_i - \bu_j)\rbr = \frac{1}{nd_i}  +
    \frac{1}{nd_j} -\frac{2}{n\sqrt{d_id_j}}\sigma_i\sigma_j + \op{\frac{1}{n}}.
  \end{equation}
The degree $d_i$ of node $i$ is a binomial random variable, which concentrates around its
mean, $p(n/2 -1) + qn/2 \approx n(p+q)/2$ for large network size. Also, $1/d_i$ is a
binomial reciprocal that also concentrates around its mean, which is given by $2/((p+q)n)
+ \o{1/n^2}$ \citep{rempala04}. We conclude that in the limit of large network size,
  \begin{equation}
    \lbr \bu_i - \bu_j ,  \bPi_2 (\bu_i - \bu_j)\rbr = \frac{4}{n^2(p+q)} \bigg( 1
    -\sigma_i\sigma_j\bigg) + \op{\frac{1}{n}}. \label{laprojection}
  \end{equation}
  Combining (\ref{lavaleurpropre}) and (\ref{laprojection}) yields
  \begin{equation}
    \lbr \bu_i - \bu_j , \frac{\lambda_2}{1-\lambda_2} \bPi_2 (\bu_i - \bu_j)\rbr
     = \frac{(p-q)}{(p+q)} \frac{4}{n^2q}
     \frac{ \big( 1 -\sigma_i\sigma_j\big)}{2} + \op{\frac{1}{n}}.
  \end{equation}
  We note that
  \begin{equation}
  \frac{(p-q)}{(p+q)} \frac{4}{n^2q}
     \frac{ \big( 1 -\sigma_i\sigma_j\big)}{2} =     
\begin{cases}
  \displaystyle \frac{4}{n(p+q)} & \text{if $i$ and $j$ are in the same community}\\
  \displaystyle \frac{4}{n(p+q)} + \frac{(p-q)}{(p+q)} \frac{4}{n^2q} & \text{if $i$ and $j$ are in
      different communities,}
\end{cases}
  \end{equation}
  which confirms that $\lbr \bu_i - \bu_j , \frac{\lambda_2}{1-\lambda_2} \bPi_2 (\bu_i -
  \bu_j)\rbr$ provides the correction in (\ref{expectedResistance}) created by the
  bottleneck between the communities. Finally, we show in
  section~\ref{preuve_tailleresidu} that the last term in the expansion (\ref{Rij_proj})
  of $R_{ij}$ can be neglected,
  \begin{equation}
    \big \lvert
    \lbr \bu_i - \bu_j , \bQ (\bu_i - \bu_j)\rbr    
    \big \rvert
    \leq \left (\frac{1}{d_i} + \frac{1}{d_j}\right) \frac{8\sqrt{2}}{(np)^{3/2}}  \quad \text{almost surely.}
    \label{tailleresidu}
  \end{equation}
This concludes the proof the lemma.\qed
\end{proof}
Lastly, the final ingredient of the proof of theorem~\ref{theorem2} is
lemma~\ref{resistance_moyenne} that shows that the population expected effective
resistance, (\ref{expectedResistance}), is the effective resistance of the expected
adjacency matrix of $(\cS,\pr)$,
\begin{equation}
  \E{\bR} = \bR\big[\E{\bA}\big].
\end{equation}
\begin{lemma}
  \label{resistance_moyenne}
  Let $\bR$ be the $n\times n$ effective resistance matrix of a network with adjacency matrix $\bA$. If
  \begin{equation}
    \bR = \displaystyle \frac{4}{n(p+q)} \bJ + \frac{p-q}{p+q} \frac{4}{n^2q} \bK,
    \mspace{24mu} \text{where} \mspace{8mu}
    \bK
    \mspace{8mu} \text{is given by (\ref{lamatriceK})}.
  \end{equation}
  Then $\bA = \bP$, where $\bP$ is given by
  (\ref{les_probabilites_de_succes}).
\end{lemma}
\begin{proof}[Proof of lemma~\ref{resistance_moyenne}]
  The proof is elementary and relies on the following three identities. First, we recover
  $\bL^\dagger$, the pseudo-inverse of the combinatorial Laplacian $\bL = \bD - \bA$,
  from $\bR$,
  \begin{equation}
    \bL^\dagger = -\frac{1}{2} \bigg[\bI -\frac{1}{n}\bJ \bigg]\bR \bigg[\bI -\frac{1}{n}\bJ\bigg].
  \end{equation}
  We can then recover $\bL$ from $\bL^\dagger$; for every $\alpha \neq 0$, we have
  \begin{equation}
    \bL = \bigg[\bL^\dagger + \frac{\alpha}{n}\bJ \bigg]^{-1} - \frac{\alpha}{n}\bJ.
  \end{equation}
  Finally,  $\bA = - \bL + \diag(\bL)$. \qed
\end{proof}
This concludes the proof of theorem~\ref{theorem2}.
\end{proof}
\section{Discussion of our results \label{discussion}}
This paper provides analytical estimates of the sample \fr mean network when the sample is
generated from a stochastic block model. We derived the expression of the sample \fr mean
when the probability space $\SBM$ is equipped with two very different distances: the
Hamming distance and the resistance distance.  This work answers the question raised by
\cite{lunagomez21} ``what is the ``mean'' network (rather than how do we estimate the
success-probabilities of an inhomogeneous random network), and do we want the ``mean''
itself to be a network?''.

We show that the sample mean network is an unweighted network whose topology is usually
very different from the average topology of the sample. Specifically, in the regime of
networks where \mbox{$\min p_{ij} < 1/2$} (e.g., networks with $\o{n^2}$ but $\omega(n)$
edges), then the sample \fr mean is the empty network, and is pointless.  In contrast, the
resistance distance leads to a sample \fr mean that recovers the correct topology induced
by the community structure; the edge density of the sample \fr mean network is the
expected edge density of the random network ensemble. The effective resistance distance is
thus able to capture the large scale (community structure) and the mesoscale, which spans
scales from the global to the local scales (the degree of a vertex).

This work is significant because it provides for the first time analytical estimates of
the sample \fr mean for the stochastic blockmodel, which is at the cutting edge of
rigorous probabilistic analysis of random networks \citep{abbe18}. The technique of proof
that is used to compute the sample \fr mean for the Hamming distance can be extended to
the large class of inhomogeneous random networks \citep{kovalenko71}. It should also be
possible to extend our computation of the \fr mean with the resistance distance to
stochastic block models with $K$ communities of arbitrary size, and varying edge density.

From a practical standpoint, our work informs the choice of distance in the context where
the sample \fr mean network has been used to characterize the topology of networks for
network-valued machine learning (e.g., detecting change points in sequences of networks
\citep{ghoshdastidar20,zambon19}, computing \fr regression \citep{petersen19}, or cluster
network datasets \citep{xu20}). Future work includes the analysis of the sample \fr mean
when the distance is based on the eigenvalues of the normalized Laplacian \cite{wills20c}.

\section*{Acknowledgments}
F.G.M was supported by the National Natural Science
Foundation (\href{https://www.nsf.gov/awardsearch/showAward?AWD_ID=1815971}{CCF/CIF
  1815971}).

\begin{thebibliography}{40}
\providecommand{\natexlab}[1]{#1}
\providecommand{\url}[1]{\texttt{#1}}
\expandafter\ifx\csname urlstyle\endcsname\relax
  \providecommand{\doi}[1]{doi: #1}\else
  \providecommand{\doi}{doi: \begingroup \urlstyle{rm}\Url}\fi

\bibitem[Abbe(2018)]{abbe18}
Emmanuel Abbe.
\newblock Community detection and stochastic block models: recent developments.
\newblock \emph{Journal of Machine Learning Research}, 18\penalty0
  (177):\penalty0 1--86, 2018.

\bibitem[Airoldi et~al.(2013)Airoldi, Costa, and Chan]{airoldi13}
Edo~M Airoldi, Thiago~B Costa, and Stanley~H Chan.
\newblock Stochastic blockmodel approximation of a graphon: Theory and
  consistent estimation.
\newblock In \emph{Advances in Neural Information Processing Systems}, pages
  692--700, 2013.

\bibitem[Akoglu et~al.(2015)Akoglu, Tong, and Koutra]{Akoglu2015}
Leman Akoglu, Hanghang Tong, and Danai Koutra.
\newblock Graph based anomaly detection and description: a survey.
\newblock \emph{Data Mining and Knowledge Discovery}, 29\penalty0 (3):\penalty0
  626--688, May 2015.
\newblock URL \url{https://doi.org/10.1007/s10618-014-0365-y}.

\bibitem[Avrachenkov et~al.(2015)Avrachenkov, Cottatellucci, and
  Kadavankandy]{avrachenkov15}
Konstantin Avrachenkov, Laura Cottatellucci, and Arun Kadavankandy.
\newblock Spectral properties of random matrices for stochastic block model.
\newblock In \emph{International Symposium on Modeling and Optimization in
  Mobile, Ad Hoc, and Wireless Networks}, pages 537--544, 2015.

\bibitem[Banks and Carley(1994)]{banks94}
David Banks and Kathleen Carley.
\newblock Metric inference for social networks.
\newblock \emph{Journal of classification}, 11:\penalty0 121--149, 1994.

\bibitem[Bapat(2010)]{bapat10}
Ravindra~B Bapat.
\newblock \emph{Graphs and matrices}, volume~27.
\newblock Springer, 2010.

\bibitem[Chakrabarty et~al.(2020)Chakrabarty, Chakraborty, and
  Hazra]{chakrabarty20}
Arijit Chakrabarty, Sukrit Chakraborty, and Rajat~Subhra Hazra.
\newblock Eigenvalues outside the bulk of inhomogeneous
  \mbox{Erd{\H{o}}s--R{\'e}nyi} random graphs.
\newblock \emph{Journal of Statistical Physics}, 181\penalty0 (5):\penalty0
  1746--1780, 2020.

\bibitem[Chen et~al.(2019)Chen, Hermelin, and Sorge]{chen19}
Jiehua Chen, Danny Hermelin, and Manuel Sorge.
\newblock {On Computing Centroids According to the p-Norms of Hamming Distance
  Vectors}.
\newblock In \emph{27th Annual European Symposium on Algorithms}, volume 144,
  pages 28:1--28:16, 2019.

\bibitem[Chowdhury and M{\'e}moli(2018)]{chowdhury18}
Samir Chowdhury and Facundo M{\'e}moli.
\newblock The metric space of networks.
\newblock \emph{arXiv preprint}, arXiv:1804.02820, 2018.

\bibitem[Chung et~al.(2003)Chung, Lu, and Vu]{Chung2003a}
Fan Chung, Linyuan Lu, and Van Vu.
\newblock The spectra of random graphs with given expected degrees.
\newblock \emph{Internet Math.}, 1\penalty0 (3):\penalty0 257--275, 2003.
\newblock URL \url{https://projecteuclid.org:443/euclid.im/1109190962}.

\bibitem[Deng et~al.(2021)Deng, Ling, and Strohmer]{deng21}
Shaofeng Deng, Shuyang Ling, and Thomas Strohmer.
\newblock Strong consistency, graph laplacians, and the stochastic block model.
\newblock \emph{Journal of Machine Learning Research}, 22\penalty0
  (117):\penalty0 1--44, 2021.

\bibitem[Donnat and Holmes(2018)]{donnat18}
Claire Donnat and Susan Holmes.
\newblock Tracking network dynamics: A survey using graph distances.
\newblock \emph{The Annals of Applied Statistics}, 12\penalty0 (2):\penalty0
  971--1012, 2018.

\bibitem[Doyle and Snell(1984)]{doyle84}
Peter~G Doyle and J~Laurie Snell.
\newblock \emph{Random walks and electric networks}, volume~22.
\newblock American Mathematical Soc., 1984.

\bibitem[Dubey and M{\"u}ller(2020)]{dubey20}
Paromita Dubey and Hans-Georg M{\"u}ller.
\newblock \mbox{Fr\'echet} change-point detection.
\newblock \emph{The Annals of Statistics}, 48\penalty0 (6):\penalty0
  3312--3335, 2020.

\bibitem[Ferguson and Meyer(2023)]{ferguson23a}
Daniel Ferguson and Fran\c{c}ois~G Meyer.
\newblock {Theoretical analysis and computation of the sample Fréchet mean of
  sets of large graphs for various metrics}.
\newblock \emph{Information and Inference: A Journal of the IMA}, 12\penalty0
  (3):\penalty0 1347--1404, 03 2023.
\newblock URL \url{https://doi.org/10.1093/imaiai/iaad002}.

\bibitem[Ferrer et~al.(2005)Ferrer, Serratosa, and Sanfeliu]{ferrer05}
Miquel Ferrer, Francesc Serratosa, and Alberto Sanfeliu.
\newblock Synthesis of median spectral graph.
\newblock In \emph{Pattern Recognition and Image Analysis: Second Iberian
  Conference}, pages 139--146, 2005.

\bibitem[Ghoshdastidar et~al.(2020)Ghoshdastidar, Gutzeit, Carpentier, and
  Von~Luxburg]{ghoshdastidar20}
Debarghya Ghoshdastidar, Maurilio Gutzeit, Alexandra Carpentier, and Ulrike
  Von~Luxburg.
\newblock Two-sample hypothesis testing for inhomogeneous random graphs.
\newblock \emph{The Annals of Statistics}, 48\penalty0 (4):\penalty0
  2208--2229, 2020.

\bibitem[Ginestet et~al.(2017)Ginestet, Li, Balachandran, Rosenberg, and
  Kolaczyk]{ginestet17}
Cedric~E. Ginestet, Jun Li, Prakash Balachandran, Steven Rosenberg, and Eric~D.
  Kolaczyk.
\newblock Hypothesis testing for network data in functional neuroimaging.
\newblock \emph{The Annals of Applied Statistics}, 11\penalty0 (2):\penalty0
  725--750, 2017.

\bibitem[Han et~al.(2016)Han, Han, Liu, Caffo, et~al.]{han16}
Fang Han, Xiaoyan Han, Han Liu, Brian Caffo, et~al.
\newblock Sparse median graphs estimation in a high-dimensional semiparametric
  model.
\newblock \emph{The Annals of Applied Statistics}, 10\penalty0 (3):\penalty0
  1397--1426, 2016.

\bibitem[Jain(2016)]{jain16b}
Brijnesh~J Jain.
\newblock Statistical graph space analysis.
\newblock \emph{Pattern Recognition}, 60:\penalty0 802--812, 2016.

\bibitem[Jiang et~al.(2001)Jiang, Munger, and Bunke]{jiang01}
Xiaoyi Jiang, Andreas Munger, and Horst Bunke.
\newblock On median graphs: properties, algorithms, and applications.
\newblock \emph{IEEE Transactions on Pattern Analysis and Machine
  Intelligence}, 23\penalty0 (10):\penalty0 1144--1151, 2001.

\bibitem[Klein and Randi\'c(1993)]{klein1993}
D.J. Klein and M.~Randi\'c.
\newblock Resistance distance.
\newblock \emph{Journal of Mathematical Chemistry}, 12\penalty0 (1):\penalty0
  81--95, 1993.

\bibitem[Kolaczyk et~al.(2020)Kolaczyk, Lin, Rosenberg, Walters, Xu,
  et~al.]{kolaczyk20}
Eric~D Kolaczyk, Lizhen Lin, Steven Rosenberg, Jackson Walters, Jie Xu, et~al.
\newblock Averages of unlabeled networks: Geometric characterization and
  asymptotic behavior.
\newblock \emph{The Annals of Statistics}, 48\penalty0 (1):\penalty0 514--538,
  2020.

\bibitem[Kovalenko(1971)]{kovalenko71}
I.N. Kovalenko.
\newblock Theory of random graphs.
\newblock \emph{Cybernetics}, 7\penalty0 (4):\penalty0 575--579, 1971.

\bibitem[Levin et~al.(2009)Levin, Peres, and Wilmer]{levin09}
David~Asher Levin, Yuval Peres, and Elizabeth~Lee Wilmer.
\newblock \emph{Markov chains and mixing times}.
\newblock American Mathematical Soc., 2009.

\bibitem[L{\"o}we and Terveer(2024)]{lowe24}
Matthias L{\"o}we and Sara Terveer.
\newblock Hitting times for random walks on the stochastic block model.
\newblock \emph{arXiv preprint arXiv:2401.07896}, 2024.

\bibitem[Lunag{\'o}mez et~al.(2021)Lunag{\'o}mez, Olhede, and
  Wolfe]{lunagomez21}
Sim{\'o}n Lunag{\'o}mez, Sofia~C Olhede, and Patrick~J Wolfe.
\newblock Modeling network populations via graph distances.
\newblock \emph{Journal of the American Statistical Association}, 116\penalty0
  (536):\penalty0 2023--2040, 2021.

\bibitem[\mbox{Fr\'echet}(1947)]{frechet47}
Maurice \mbox{Fr\'echet}.
\newblock Les espaces abstraits et leur utilit{\'e} en statistique
  th{\'e}orique et m{\^e}me en statistique appliqu{\'e}e.
\newblock \emph{Journal de la Soci{\'e}t{\'e} Fran{\c{c}}aise de Statistique},
  88:\penalty0 410--421, 1947.

\bibitem[Mitzenmacher and Upfal(2017)]{mitzenmacher17}
Michael Mitzenmacher and Eli Upfal.
\newblock \emph{Probability and computing: Randomization and probabilistic
  techniques in algorithms and data analysis}.
\newblock Cambridge university press, 2017.

\bibitem[Monnig and Meyer(2018)]{monnig18}
Nathan~D. Monnig and Fran\c{c}ois~G. Meyer.
\newblock The resistance perturbation distance: A metric for the analysis of
  dynamic networks.
\newblock \emph{Discrete Applied Mathematics}, 236:\penalty0 347 -- 386, 2018.

\bibitem[Olhede and Wolfe(2014)]{olhede14}
Sofia~C Olhede and Patrick~J Wolfe.
\newblock Network histograms and universality of blockmodel approximation.
\newblock \emph{Proceedings of the National Academy of Sciences}, 111\penalty0
  (41):\penalty0 14722--14727, 2014.

\bibitem[Petersen and M{\"u}ller(2019)]{petersen19}
Alexander Petersen and Hans-Georg M{\"u}ller.
\newblock Fr{\'e}chet regression for random objects with euclidean predictors.
\newblock \emph{The Annals of Statistics}, 47\penalty0 (2):\penalty0 691--719,
  2019.

\bibitem[Rempa{\l}a(2004)]{rempala04}
Grzegorz Rempa{\l}a.
\newblock Asymptotic factorial powers expansions for binomial and negative
  binomial reciprocals.
\newblock \emph{Proceedings of the American Mathematical Society}, 132\penalty0
  (1):\penalty0 261--272, 2004.

\bibitem[Snijders(2011)]{snijders11}
Tom~AB Snijders.
\newblock Statistical models for social networks.
\newblock \emph{Annual review of sociology}, 37:\penalty0 131--153, 2011.

\bibitem[Sturm(2003)]{sturm03}
Karl-Theodor Sturm.
\newblock Probability measures on metric spaces of nonpositive.
\newblock \emph{Heat kernels and analysis on manifolds, graphs, and metric
  spaces}, 338:\penalty0 357, 2003.

\bibitem[Von~Luxburg et~al.(2014)Von~Luxburg, Radl, and Hein]{luxburg14}
Ulrike Von~Luxburg, Agnes Radl, and Matthias Hein.
\newblock Hitting and commute times in large random neighborhood graphs.
\newblock \emph{Journal of Machine Learning Research}, 15\penalty0
  (1):\penalty0 1751--1798, 2014.

\bibitem[White and Wilson(2007)]{white07}
David White and Richard~C Wilson.
\newblock Spectral generative models for graphs.
\newblock In \emph{14th International Conference on Image Analysis and
  Processing (ICIAP 2007)}, pages 35--42. IEEE, 2007.

\bibitem[Wills and Meyer(2020)]{wills20c}
Peter Wills and Fran{\c{c}}ois~G Meyer.
\newblock Metrics for graph comparison: a practitioner’s guide.
\newblock \emph{\mbox{PLoS ONE}}, 15(2):\penalty0 1--54, 2020.

\bibitem[Xu(2020)]{xu20}
Hongtengl Xu.
\newblock Gromov-wasserstein factorization models for graph clustering.
\newblock In \emph{Proceedings of the AAAI conference on artificial
  intelligence}, volume 34(04), pages 6478--6485, 2020.

\bibitem[Zambon et~al.(2019)Zambon, Alippi, and Livi]{zambon19}
Daniele Zambon, Cesare Alippi, and Lorenzo Livi.
\newblock Change-point methods on a sequence of graphs.
\newblock \emph{IEEE Transactions on Signal Processing}, 67\penalty0
  (24):\penalty0 6327--6341, 2019.

\end{thebibliography}

\section{Additional proofs
\label{lespreuves}}
\subsection{Proof of lemma~\ref{lemma4}}
We start with a simple result that provides an expression for the Hamming distance
squared. Let $\bA,\bB \in \cS$, and  let $\edgB$ denote the set of edges of $\bB$, and $\edgNB$ denote
the set of ``nonedges'' of $\bB$.  We denote by $\nEdgB$ the number of edges in
$\bB$. Then, the Hamming distance squared is given by
  \begin{equation}
    d^2_H(\bA,\bB) =  \nEdgB^2
    + 2 \nEdgB
    \Big[\MM
      \sum_{[i,j] \in \edgNB}\MM \aij  - \MM
      \sum_{[\ijp] \in \edgB} \MM \aijp
      \Big] 
    - \mspace{4mu} 4  \MMm \sum_{\EEbarre} 
    \MM \aij \aijp
    + \Big[\MM \sum_{\allij}\MM \aij \Big]^2 \label{F2ofB}
  \end{equation}
The proof of (\ref{F2ofB}) is elementary, and is omitted for brevity. We now provide the proof of lemma~\ref{lemma4}.
\begin{proof}[Proof of lemma~\ref{lemma4}]
Applying (\ref{F2ofB})  for each network $\Gk$, we get
  \begin{align}
    \F[2]{\bB}
    = &  \nEdgB^2
    + 2  \nEdgB
    \bigg[\sum_{[i,j] \in \edgNB} \frac{1}{N} \sum_{k=1}^N \akij
      -
      \MM \sum_{[\ijp] \in \edgB}  \frac{1}{N} \sum_{k=1}^N \akijp
      \bigg]\notag \\
    &  -  4 \MM \sum_{\EEbarre} \bigg[\frac{1}{N} \sum_{k=1}^N \akij \akijp \bigg]
    + \frac{1}{N} \sum_{k=1}^N \Big[\sum_{1 \leq i < j \leq n}\akij \Big]^2  
  \end{align}
  Using the expressions for the sample mean (\ref{sample_probability}) and correlation
  (\ref{sample_correlation}), and observing that
  \begin{equation}
    \frac{1}{N} \sum_{k=1}^N \Big[\MM \sum_{1 \leq i < j \leq n}\akij \Big]^2
    =   \MM \sum_{\iipjjp}
    \frac{1}{N} \sum_{k=1}^N \akij \akijp 
    =   \MM  \sum_{\iipjjp}  \MM \sE{\roij} \label{sommecarres},
  \end{equation}
  we get
  \begin{align}
    \F[2]{\bB}
    = &  \nEdgB^2
    +
    2  \nEdgB \Big [\MM \sum_{[i,j] \in \edgNB} \MM \hP_{ij}  - \mm \MM \sum_{[\ijp] \in \edgB} \MM \hP_{\ijp} \Big] 
    -4 \MM \mm \sum_{\EEbarre} \MM   \sE{\roij} 
    +  \MM \sum_{\iipjjp}  \MM \sE{\roij}. \label{Thenwehave}
  \end{align}
  Also, we have
  \begin{align}
    \nEdgB^2
     + 2  \nEdgB  & \Big[\mm \sum_{[i,j] \in \edgNB} \MM \hP_{ij}  - \MM \sum_{[\ijp] \in
        \edgB}  \MM \hP_{\ijp} \Big] \notag \\
     =  \bigg[ \nEdgB & - 2 \MM \sum_{(\ijp) \in \cE(\bB)} \MM \hP_{\ijp} \bigg]
    \bigg[\nEdgB + 2  \MM \sum_{(i,j) \in \edgNB} \MM \hP_{ij}\bigg] 
    + \pp 4 \MMm \sum_{\EEbarre} \hP_{ij}\hP_{\ijp}. \notag
  \end{align}
Whence
  \begin{align}
    \nEdgB^2
     + 2  \nEdgB  & \Big[\mm \sum_{[i,j] \in \edgNB} \MM \hP_{ij}  - \MM \sum_{[\ijp] \in
        \edgB}  \MM \hP_{\ijp} \Big] \notag \\
     =  \bigg[ \nEdgB & - 2 \MM \sum_{(\ijp) \in \cE(\bB)} \MM \hP_{\ijp} \bigg]
    \bigg[\nEdgB - 2 \MM \sum_{(\ijp) \in \cE(\bB)}  \MM \hP_{\ijp} + 2 \MM \sum_{1 \le i<j \le n}  \MM  \hP_{ij}  \bigg]
    +  4 \MM \sum_{\EEbarre} \MMm  \hP_{ij}\hP_{\ijp} \notag\\
    =  \bigg[ \nEdgB & - 2 \MM \sum_{(\ijp) \in \cE(\bB)} \MM \hP_{\ijp} \bigg]^2 + 2 \MM \sum_{\allij}  \MM  \hP_{ij}
    \bigg[\nEdgB - 2 \MM \sum_{(\ijp) \in \cE(\bB)}  \MM  \hP_{\ijp} \bigg]
    +  \Big[ \MM \sum_{\allij}  \MM \hP_{ij}\Big]^2  \notag\\
     +  4 \MM \sum_{\EEbarre} \MMm  & \hP_{ij}\hP_{\ijp} - \Big[ \MM \sum_{\allij}  \MM
       \hP_{ij}\Big]^2 \notag
  \end{align}
  Completing the square yields
  \begin{align}
  \nEdgB^2  + 2  \nEdgB   & \Big[\mm \sum_{[i,j] \in \edgNB} \mm \hP_{ij}  - \MM \sum_{[\ijp] \in
        \edgB}  \MM \hP_{\ijp} \Big] \notag \\
    =  \bigg[ \nEdgB & - 2 \MM \sum_{(\ijp) \in \cE(\bB)} \MM \hP_{\ijp} +   \sum_{\allij}  \MM  \hP_{ij} \Big]^2
    +  4 \MM \sum_{\stackrel{\scriptstyle [i,j] \in \edgNB}{\scriptsize (\ijp) \in
        \cE(\bB)}} \MMm  \hP_{ij}\hP_{\ijp} - \Big[ \MM \sum_{\allij}  \MM \hP_{ij}\Big]^2 \notag\\
    =  \Big[ \MM \sum_{(\ijp) \in \cE(\bB)}  \MM & (1 - 2 \hP_{\ijp} ) + \MM  \sum_{\allij}  \MM  \hP_{ij} \Big]^2
    +  4 \MM \sum_{\EEbarre}  \MMm  \hP_{ij}\hP_{\ijp} - \MMm \sum_{\iipjjp}  \MM \hP_{ij}\hP_{\ijp}.  \label{lecarre}
  \end{align}
  We can then substitute (\ref{sommecarres}) and (\ref{lecarre}) into (\ref{Thenwehave}),
  and we get the result advertised in the lemma,
  \begin{equation}
    \mm \F[2]{\bB}   =   \Big[\MM
      \sum_{[i,j] \in \edgB}  \MM \big (1 - 2\hP_{ij}\big)
      + \MMm \sum_{\allij} \MM \hP_{ij} \Big]^2
    \mm - \MMm \sum_{\iipjjp} \MMm \hP_{ij}  \hP_{\ijp}\hP_{\ijp}\p  - \sE{\roij}
     + 4 \MMm \sum_{\EEbarre}  \MMm  \hP_{ij} \hP_{\ijp}\p - \sE{\roij}, \label{F2_sample} 
  \end{equation}
  where we recognize the first term as $\delta^2(\bB,\hP)$. \qed
  \end{proof}
\subsection{Proof of lemma~\ref{epsilon-concentrates}
\label{proof-epsilon-concentrates}}
\begin{proof}[Proof of lemma~\ref{epsilon-concentrates}]
  We recall that the residual term $\zeta_N(\bB)$ is a sum of two types of terms,
  \begin{equation}
    \zeta_N(\bB) = \MM  \sum_{\EEbarre} \MM
    \hP_{ij} \hP_{\ijp} - \sE{\roij}.
    \label{le_residue}
  \end{equation}
  The sample mean $\hP_{ij}$, (\ref{sample_probability}), is the sum of $N$ independent
  Bernoulli random variables, and it concentrates around its mean $\pij$. The variation of
  $\hP_{ij}$ around $\pij$ is bounded by Hoeffding inequality,
  \begin{equation}
    \forall \pp 1 \le i< j \le n, \PP \forall N \ge 1, \quad
    \prob{\bAk \sim \SBM; \Big \lvert  \hP_{ij} -p_{ij} \Big \rvert  \ge \delta
    }
    \le
    \exp{\displaystyle \left(-2N \delta^2\right)}.
  \end{equation}
Let $\eps > 0$, and let $\alpha \eqdef \sqrt{\log{(n/\sqrt{\eps/2}})}$, a union bound yields
  \begin{equation}
     \forall N \ge 1, \PP \prob{\bAk \sim \SBM;
      \forall \pp 1 \le i < j < n,\quad
      \Big \lvert \hP_{ij} -\pij \Big \rvert \le \frac{\alpha}{\sqrt{N}}
     } > 1 -\eps/4.
     \label{hoeffding_union}
  \end{equation}
  The sample correlation, $\hRho_{\ijijp}$, (\ref{sample_correlation}), is evaluated in
  (\ref{le_residue}) for $[i,j] \in \edgB$ and $[\ijp] \in \edgNB$.  In this case, the
  edges $[i,j]$ and $[\ijp]$ are always distinct, thus $\akij$ and $\akijp$ are
  independent, and $\akij \akijp$ is a Bernoulli random variable with parameter $\pij
  \pijp$. We conclude that $\hRho_{\ijijp}$ is the sum of $N$ independent Bernoulli random
  variables, and thus concentrates around its mean, $\pij\pijp$.
  
  Let $\eps > 0$, and let $\beta \eqdef \sqrt{\log{(n^2/\sqrt{2\eps})}}$, Hoeffding
  inequality and a union bound yield
  \begin{equation}
    \forall N \ge 1, \quad 
    \prob{
      1 \le i< j \le n, 
      1 \le i^\prime< j^\prime\le n,\;  [i,j] \neq [\ijp],
      \bigg \lvert \sE{\roij}   -\pij \pijp \bigg \rvert \le \frac{\beta}{\sqrt{N}}
    }  > 1 - \eps/2,
    \label{hoeffding3}
  \end{equation}
  Combining (\ref{hoeffding_union}) and (\ref{hoeffding3}) yields
  \begin{align}
    \forall \eps > 0, & \pp \exists \alpha_1,\alpha_2, \beta, \PP \forall N \ge 1,
    \pp \forall \pp 1 \le  i  < j \le n, \pp  \forall \p 1 \le i^\prime < j^\prime\le n,  
     \pp [i,j] \neq [\ijp], \notag\\ 
          & \Big \lvert \hP_{ij}   -  \pij  \Big \rvert  \le \frac{\alpha_1}{\sqrt{N}}, \quad
          \Big \lvert \hP_{\ijp}  -  \pijp \Big \rvert \le \frac{1}{\sqrt{N}}, \quad \text{and} \quad 
          \Big \lvert \sE{\roij}  -  \pij \pijp \Big \rvert \le \frac{\beta}{\sqrt{N}},
          \label{approximations}
  \end{align}
  with probability $ 1 - \eps$. Lastly, combining (\ref{approximations}) and
  (\ref{le_residue}), we get the advertised result, 
  \begin{align}
    \forall&   \eps > 0,    \exists \; c > 0, \forall \; N  \ge 1,\notag\\
    & \pr  \Big (\pp
    \Big \lvert  \MMm  \sum_{\EEbarre} \MMm \hP_{ij}  \hP_{\ijp}  -\pij\pijp   - \sE{\roij} + \pij\pijp \Big \rvert
    \le \frac{c}{\sqrt{N}}
    \Big )
    = \pr \Big (
    \big \lvert \zeta_N(\bB) \big \rvert \le \frac{c}{\sqrt{N}}
    \Big)
    > 1 - \eps. \label{neglect1}
  \end{align}
\qed
\end{proof}
\subsection{Proof of lemma~\ref{variance_inequality}
  \label{proof_variance_inequality}}
We first provide some inequalities (the proof of which are omitted) that relate
$\delta$ to the matrix norm $\lun{\p}$.
\begin{lemma}
  \label{delta_inequality}
  Let $\bA,\bB$ and $\bC$ be weighted adjacency matrices, with  $a_{ij},b_{ij},c_{ij} \in [0,1]$. We have
  \begin{equation}
    \frac{1}{2} \lun{\bA -\bB}  \leq \delta (\bA,\bB),
    \mspace{24mu}
    \text{and}
    \mspace{24mu}
    \frac{1}{2} \lun{\bA -\bC}  \leq \delta (\bA,\bB) + \delta(\bB,\bC).
  \end{equation}

\end{lemma}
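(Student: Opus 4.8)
The plan is to reduce both inequalities to a single elementary pointwise estimate on the scalar entries $a_{ij},b_{ij}\in[0,1]$. First I would rewrite $\delta$ entrywise: since $\delta(\bA,\bB)=\sum_{\allij}\big(a_{ij}+b_{ij}-2a_{ij}b_{ij}\big)=\sum_{\allij}\big(a_{ij}(1-b_{ij})+b_{ij}(1-a_{ij})\big)$, each summand is a sum of two nonnegative terms (using $a_{ij},b_{ij}\in[0,1]$). The key observation is the algebraic identity $a_{ij}-b_{ij}=a_{ij}(1-b_{ij})-b_{ij}(1-a_{ij})$, which by the triangle inequality for real numbers gives $|a_{ij}-b_{ij}|\le a_{ij}(1-b_{ij})+b_{ij}(1-a_{ij})$ for every pair $\allij$.

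Summing this bound over all $\allij$, and using that $\bA,\bB$ are symmetric with zero diagonal so that $\lun{\bA-\bB}=2\sum_{\allij}|a_{ij}-b_{ij}|$, yields
$\tfrac12\lun{\bA-\bB}=\sum_{\allij}|a_{ij}-b_{ij}|\le\sum_{\allij}\big(a_{ij}(1-b_{ij})+b_{ij}(1-a_{ij})\big)=\delta(\bA,\bB)$,
which is the first claimed inequality (in fact with the sharper constant $1$ in place of $1/2$, so the stated $1/2$ is comfortably sufficient).

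For the second inequality I would apply the scalar triangle inequality $|a_{ij}-c_{ij}|\le|a_{ij}-b_{ij}|+|b_{ij}-c_{ij}|$ entrywise, sum over $\allij$, and bound the two resulting sums by the first inequality applied to the pairs $(\bA,\bB)$ and $(\bB,\bC)$ respectively. This gives $\tfrac12\lun{\bA-\bC}\le\tfrac12\lun{\bA-\bB}+\tfrac12\lun{\bB-\bC}\le\delta(\bA,\bB)+\delta(\bB,\bC)$, as desired.

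There is no genuine obstacle here; the only point requiring (minimal) care is verifying the pointwise inequality $|x-y|\le x(1-y)+y(1-x)$ on $[0,1]^2$, which is immediate from the displayed identity together with nonnegativity of $x(1-y)$ and $y(1-x)$. The factor $1/2$ throughout simply absorbs the factor $2$ coming from counting each off-diagonal entry twice in the norm $\lun{\cdot}$.
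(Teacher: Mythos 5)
The paper omits the proof of this lemma, so there is nothing to compare against; your argument is correct and fills the gap cleanly. The pointwise identity $a-b=a(1-b)-b(1-a)$ together with the nonnegativity of $a(1-b)$ and $b(1-a)$ for $a,b\in[0,1]$ gives $|a_{ij}-b_{ij}|\le a_{ij}(1-b_{ij})+b_{ij}(1-a_{ij})$, and summing over $\allij$ (with $\lun{\bA-\bB}=2\sum_{\allij}|a_{ij}-b_{ij}|$ by symmetry and zero diagonal) yields exactly $\tfrac12\lun{\bA-\bB}\le\delta(\bA,\bB)$; the second inequality then follows from the scalar triangle inequality applied entrywise. One small correction: your parenthetical claim that you in fact obtain the ``sharper constant $1$'' is false --- for $0$--$1$ matrices one has $|a-b|=a(1-b)+b(1-a)$ exactly, so the constant $1/2$ in the lemma is tight and your derivation gives precisely the stated bound, no more; this slip does not affect the validity of the proof.
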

\begin{proof}[Proof of lemma~\ref{variance_inequality}]
Let $\bB, \smd{\pr} \in \cS$. From the definition of $\widehat{F}$ (see (\ref{la_fonction_F})) we have
  \begin{equation}
    \F[]{\bB} - \F[]{\smd{\pr}} = \delta^2(\bB,\hP) - \delta^2 (\smd{\pr}, \hP) =
    \Big(\delta(\bB,\hP) - \delta (\smd{\pr}, \hP) \Big)\Big(\delta(\bB,\hP) + \delta (\smd{\pr}, \hP) \Big).
  \end{equation}
  Because of lemma~\ref{delta_inequality}, we have
  \begin{equation}
    \delta(\bB,\hP) + \delta (\smd{\pr}, \hP) \ge \lun{\bB - \smd{\pr}}. \label{delta_plus_delta}
  \end{equation}
  Also,
  \begin{align}
    \delta(\bB,\hP) - \delta (\smd{\pr}, \hP) & =
    \MM \sum_{\allij} \MM b_{ij} - \smd{\pr}_{ij} -2 \MM \sum_{\allij} \hat{p}_{ij} (b_{ij} - \smd{\pr}_{ij}) \notag\\
    & =  \MM \sum_{\allij} \MM ( 1- 2\hat{p}_{ij}) (b_{ij} - \smd{\pr}_{ij}).
  \end{align}
  The entries of $\smd{\pr}$ are equal to $1$ only along $\cE(\smd{\pr})$, and $0$ along
  $\nE(\smd{\pr})$. Therefore,
  \begin{equation}
    \delta(\bB,\hP) - \delta (\smd{\pr}, \hP) 
     =  \MM \sum_{[i,j]\in \nE(\smd{\pr})} \MM b_{ij}( 1- 2\hat{p}_{ij})  + \MMm
    \sum_{[i,j]\in\cE(\smd{\pr})} \MM (1 - b_{ij})(2\hat{p}_{ij} -1).  \label{delta_difference}
  \end{equation}
  Let $\eps > 0$, because of the concentration of $\hat{p}_{ij} = \hP_{ij}$ around $\pij$, $\exists N_0$, $\forall
  N \ge N_0$,
  \begin{equation}
    \prob{\allij,\quad \big \lvert \hP_{ij} - \pij \big \rvert < \eps/2} > 1 -\eps. 
  \end{equation}
  We recall that we assume that $\lvert \pij - 1/2 \rvert > \eta, \quad \allij$, and
  therefore we get that for all $ 0 < \eps < 2\eta$,
  \begin{equation}
    \prob{\allij,\quad \Big \lvert 2\hP_{ij} - 1 \Big \rvert > 2\eta - \eps} > 1-\eps. \label{away_from_one_half}
  \end{equation}
  Because $\smd{\pr}$ is constructed using the majority rule, we have
  \begin{equation}
    \Big \lvert 2\hP_{ij} - 1 \Big \rvert = 
    \begin{cases}
      2\hat{p}_{ij} - 1  & \quad \text{if}\quad [i,j] \in \cE(\smd{\pr}),\\
      1 - 2\hat{p}_{ij}  & \quad \text{if}\quad [i,j] \in \nE(\smd{\pr}).
    \end{cases}
  \end{equation}
  Substituting the expression of $\Big \lvert 2\hP_{ij} - 1 \Big \rvert$ in 
  (\ref{away_from_one_half}) yields the following lower bounds, with probability $1-\eps$,
  \begin{equation}
    \begin{cases}
      2\hat{p}_{ij} - 1  > 2 \eta - \eps & \quad \text{if}\quad [i,j] \in \cE(\smd{\pr}),\\
      1 - 2\hat{p}_{ij} > 2 \eta - \eps & \quad \text{if}\quad [i,j] \in \nE(\smd{\pr}).
    \end{cases}
    \label{lower_bound_median}
  \end{equation}
  Inserting the inequalities given by (\ref{lower_bound_median}) into
  (\ref{delta_difference}) gives the following lower bound that happens with probability $1 - \eps$,
  \begin{equation}
    \delta(\bB,\hP) - \delta (\smd{\pr}, \hP)
    \ge
    (2\eta - \eps) \Big[\MMm
      \sum_{[i,j]\in\cE(\smd{\pr})} \MM (1 - b_{ij}) + \MMm \sum_{[i,j]\in\nE(\smd{\pr}_{ij})} \MM b_{ij}
      \Big].
  \end{equation}
  We bring the proof to an end by observing that
  \begin{align}
    \lun{\bB - \smd{\pr}}  
    & = \MM \sum_{\allij} \MM \big \lvert b_{ij} - \smd{\pr}_{ij} \big \rvert 
    = \MMM \sum_{[i,j]\in\cE(\smd{\pr})} \MMm \big \lvert b_{ij} - 1 \big \rvert + \MMm
    \sum_{[i,j] \in \nE(\smd{\pr})} \MMm \big \lvert  b_{ij} \big \rvert
    = \MMm \sum_{[i,j]\i n\cE(\smd{\pr})} \MMm (1 - b_{ij})
    + \MMM \sum_{[i,j]\in \nE(\smd{\pr})} \MMm b_{ij},
  \end{align}
  whence we conclude that
  \begin{equation}
    \delta(\bB,\hP) - \delta (\smd{\pr}, \hP)
    \ge
    (2\eta - \eps) \lun{\bB - \smd{\pr}}, \label{delta_moins_delta}
  \end{equation}
  with probability $1-\eps$. Finally, combining (\ref{delta_plus_delta}) and
  (\ref{delta_moins_delta}), and letting $\alpha \eqdef 2\eta -\eps$, we get the
  inequality advertised in lemma~\ref{variance_inequality},
  \begin{equation}
    \alpha \lun{\bB - \smd{\pr}}^2 \le \big \lvert \F[]{\bB} - \F[]{\smd{\pr}} \big \rvert,
  \end{equation}
  with probability $1-\eps$.
\qed
\end{proof}
\subsection{Proof of equation~(\ref{tailleresidu})
\label{preuve_tailleresidu}}
We show
\begin{equation}
  \big \lvert
  \lbr \bu_i - \bu_j , \bQ (\bu_i - \bu_j)\rbr    
  \big \rvert
  \leq \left (\frac{1}{d_i} + \frac{1}{d_j}\right) \frac{8\sqrt{2}}{(np)^{3/2}} \quad \text{almost surely.}
    \label{tailleresidu2}
\end{equation}

\begin{proof}
  Let $i,j\in [n]$. We have
  \begin{align}
    \bigg \lvert \lbr \bu_i - \bu_j, \bQ(\bu_i-\bu_j)\rbr \bigg \rvert
    & = 
    \bigg \lvert  \lbr \bu_i - \bu_j,
    \sum_{m=3}^n \frac{\lambda_m}{ 1- \lambda_m} \bPi_m 
    (\bu_i-\bu_j)\rbr \bigg \rvert \\
    & \le
    \bigg \| \sum_{m=3}^n \frac{\lambda_m}{ 1- \lambda_m} \bPi_m \bigg \|\| \bu_i-\bu_j\|^2
     \le
    \bigg\{\frac{1}{d_i} + \frac{1}{d_j} \bigg\}
    \bigg \| \sum_{m=3}^n \frac{\lambda_m}{ 1- \lambda_m} \bPi_m \bigg \|.
      \label{threeterms}
  \end{align}
  Now
  \begin{equation}
    \bigg \| \sum_{m=3}^n \frac{\lambda_m}{ 1- \lambda_m} \bPi_m \bigg \|^2 =
    \sum_{m=3}^n \bigg\lvert \frac{\lambda_m}{ 1- \lambda_m}
    \bigg\rvert^2 \big \| \bPi_m \big \|^2
    \le 
    \sum_{m=3}^n 2 \big\lvert \lambda_m  \big\rvert^2 \big \| \bPi_m \big \|^2
    \le
    2 \bigg[\max_{m=2}^n \big\lvert \lambda_m  \big\rvert \bigg]^2,
  \end{equation}
  because the $\bPi_m$ are orthonormal projectors such that $\sum_{m=1}^n \bPi_m = \bI$.
  Using the  following concentration result (e.g., Theorem 3.6 in \citep{Chung2003a}),
  \begin{equation}
    \max_{m=3}^n \big \lvert \lambda_m \big \rvert \leq \frac{8}{\sqrt{np}} \quad \text{almost surely.}
    \label{lambdamax}
  \end{equation}
  we conclude that
  \begin{equation}
    \big \lvert \lbr \bu_i - \bu_j, \bQ (\bu_i - \bu_j)\rbr \big \rvert 
    \le
    \left(\frac{1}{d_i} + \frac{1}{d_j} \right) \frac{8\sqrt{2}}{\left(np\right)^{3/2}} \quad \text{almost surely.}
    \label{order3}
  \end{equation}
\qed
\end{proof}
\end{document}